\documentclass{article}

\usepackage[final,main]{neurips_2026}

\usepackage[utf8]{inputenc}
\usepackage[T1]{fontenc}
\usepackage{hyperref}
\usepackage{url}
\usepackage{booktabs}
\usepackage{amsmath}
\usepackage{amsfonts}
\usepackage{amssymb}
\usepackage{nicefrac}
\usepackage{microtype}
\usepackage{xcolor}
\usepackage{graphicx}
\usepackage{wrapfig}
\usepackage{subcaption}
\usepackage[ruled,lined]{algorithm2e}
\usepackage{multirow}
\usepackage{enumitem}
\usepackage{pgfplots}
\usepackage{etoc}
\pgfplotsset{compat=1.18}
\usepackage[breakable]{tcolorbox}
\usepackage{amsthm}

\newtheorem{proposition}{Proposition}
\usepackage{listings}
\lstdefinestyle{promptstyle}{
  basicstyle=\small\ttfamily,
  breaklines=true,
  columns=fullflexible,
  keepspaces=true,
}

\title{
SquidAgent: Parallelize Wisely, Coordinate Efficiently}

\author{%
  Yexiong Lin\textsuperscript{1}\quad
  Shanshan Ye\textsuperscript{2}\thanks{Corresponding author: \texttt{shanshan.ye@mbzuai.ac.ae}.}\quad
  Yu Yao\textsuperscript{1}\\
  \textbf{Zhen Fang\textsuperscript{3}\quad
  Bo Han\textsuperscript{4}\quad
  Tongliang Liu\textsuperscript{1,2}}\\[0.5em]
  \normalfont\textsuperscript{1}The University of Sydney\\
  \normalfont\textsuperscript{2}Mohamed bin Zayed University of Artificial Intelligence\\
  \normalfont\textsuperscript{3}University of Technology Sydney\\
  \normalfont\textsuperscript{4}Hong Kong Baptist University
}

\begin{document}

\maketitle

\begin{abstract}
LLM-based agents solve complex multi-step tasks, but sequential execution incurs substantial latency. In principle, parallelizing work across multiple agents should yield near-linear speedups. Yet existing parallel multi-agent systems often run slower than a single-agent baseline. We attribute this gap to two hidden costs that parallel execution incurs but a serial agent avoids. First, there is a \emph{re-exploration cost}: redundant effort spent by parallel workers reconstructing context that the orchestrator already possesses, such as prior decisions, that would otherwise be inherited implicitly in a serial execution. Second, there is an \emph{alignment cost}: the overhead required to reconcile inconsistencies across independently generated outputs. We thus derive a principled decision criterion: a layer should be parallelized only when its critical-path cost, plus re-exploration and alignment overheads, is lower than the corresponding serial cost. While this criterion is naturally expressed in wall-clock time, we observe that LLMs are poorly calibrated when asked to estimate task duration. To address this, we instead measure cost in predicted output tokens, which we empirically find LLMs can estimate substantially more reliably than wall-clock time. Building on this token-based criterion, we propose SquidAgent. It estimates all token budgets in a single planning step, forks each worker directly from the orchestrator's session to eliminate re-exploration cost, and replaces post-hoc reconciliation with a pre-generated shared convention block that converts alignment into a bounded upfront cost. A deterministic scheduler then applies the criterion layer by layer. Empirically, SquidAgent achieves a 2.2$\times$ mean throughput improvement and a 2.6$\times$ mean wall-time speedup over Claude Code, and a 2.0$\times$ throughput improvement over the strongest multi-agent baseline.
\end{abstract}

\etocdepthtag.toc{mtchapter}
\section{Introduction}
\label{sec:intro}

LLM-based agent systems are increasingly deployed for complex, multi-step tasks such as software development~\citep{hong2024metagpt,qian2024chatdev}, document authoring, and structured planning. As tasks grow in scale and complexity, a single agent often becomes a bottleneck, motivating the rise of \emph{multi-agent systems} that coordinate multiple specialized agents to solve a task collaboratively. 
A common paradigm in these systems is to decompose a high-level user request into a set of subtasks organized as a directed acyclic graph (DAG)~\citep{zhuge2024gptswarm,qian2025scaling}, where edges encode dependencies between subtasks, and assign each subtask to a dedicated agent. 
This graph-based decomposition has been adopted by a wide range of multi-agent frameworks, from general-purpose conversation platforms~\citep{wu2024autogen,li2023camel} to specialized systems for code generation~\citep{niu2025flow,kim2024llm} and workflow automation~\citep{zhuge2024gptswarm,wu2024agentkit}.
In this work, we focus on the common LLM-agent setting in which an orchestrator delegates subtasks to multiple LLM workers that share the objective of completing the user request.

A key advantage of multi-agent systems is the potential for \emph{parallelism}. When subtasks at the same DAG layer are independent, multiple worker agents can execute them concurrently, so that wall-clock time is determined by the slowest worker rather than the sum of all tasks. In theory, such parallel execution should provide substantial speedups, and many recent multi-agent systems are built on this premise~\citep{qian2025scaling,niu2025flow,kim2024llm,yu2025dyntaskmas}.

However, existing parallel multi-agent systems often fail to achieve the expected speedups and can even be slower than their sequential counterparts~\citep{hong2024metagpt,yao2025can,xu2026rethinking}. This suggests a fundamental limitation in current execution strategies: they lack an explicit criterion for determining when parallelization is actually worthwhile. Existing methods either rely on fixed execution policies, such as always parallelizing subtasks once explicit dependencies are satisfied~\citep{qian2025scaling,niu2025flow}, or depend on heuristic LLM-based scheduling decisions~\citep{dang2026multiagent,wang2026agentconductor}. This raises a central question: when is parallel execution truly beneficial for LLM-based multi-agent systems?

We argue that the answer depends on accounting for the hidden costs introduced by parallelization. We identify two such costs that parallelism pays but a sequential agent does not. The first is \emph{re-exploration cost}: any agent, serial or parallel, must do some exploration to understand a task before solving it, but a serial agent does so once and reuses the resulting context for free, whereas parallel workers have to independently rediscover the same conventions, assumptions, and intermediate decisions before they can begin useful work.
Note that this is not generic exploration, it is the \emph{redundant} portion that exists only because the orchestrator's planning state cannot be transferred to parallel workers for free. The second is \emph{alignment cost}: independently generated outputs must be reconciled into a coherent final result, and inconsistencies among workers may require additional correction.
These costs arise because LLM-generated DAGs typically capture explicit dependencies among subtasks but overlook implicit dependencies, such as shared naming conventions, formatting choices, or global design decisions, that a single agent would resolve incrementally. Parallel execution is therefore beneficial only when the time saved by concurrency outweighs the additional re-exploration and alignment costs. Figure~\ref{fig:motivation} illustrates this phenomenon on a representative task.

\begin{figure}[t]
  \centering
  \includegraphics[width=\linewidth]{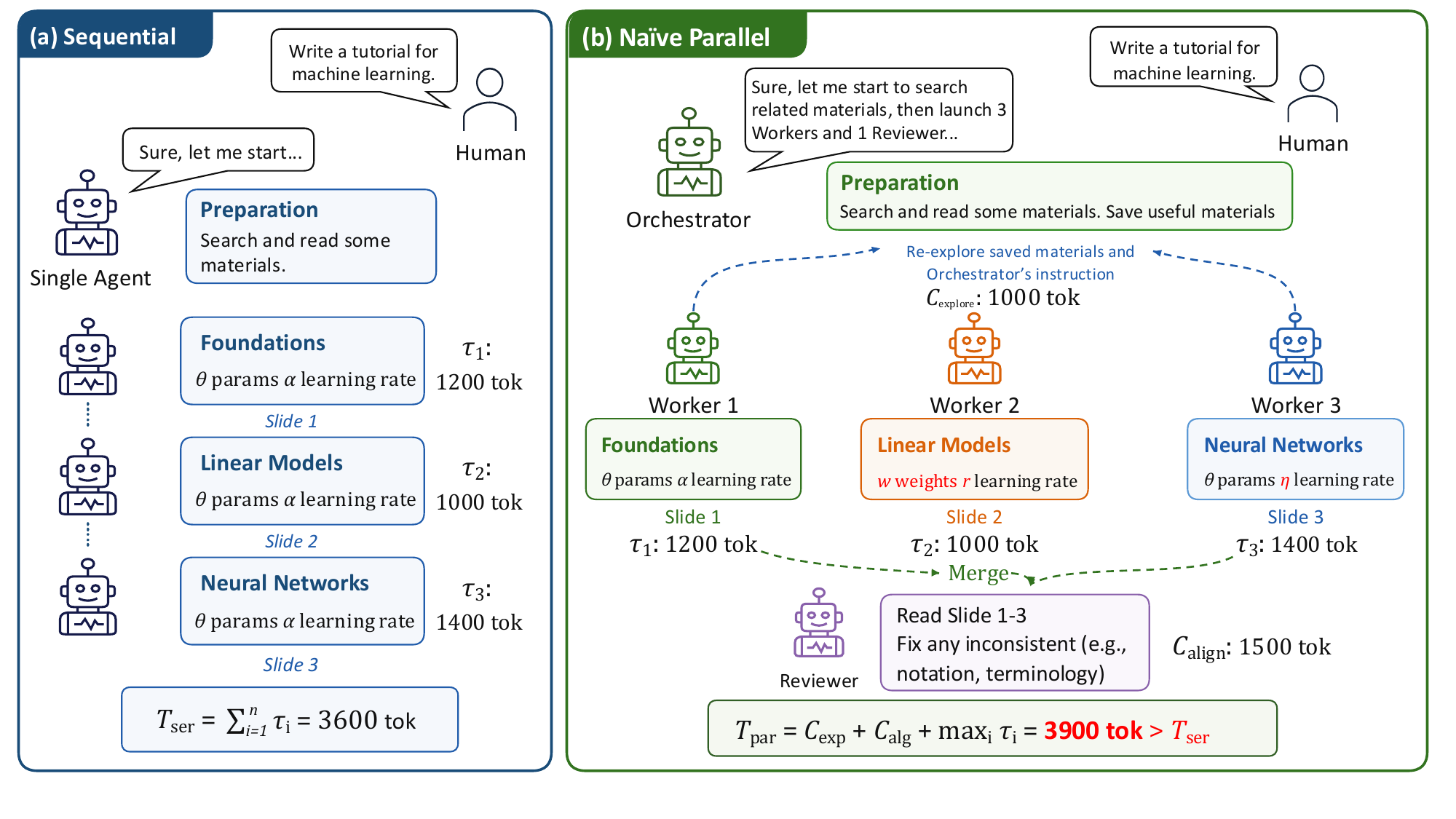}
\vspace{-25px}
  \caption{Comparison of two execution strategies on a task with four subtasks. (a)~Sequential: a single agent executes all tasks one by one with no overhead. (b)~Naive parallel: workers run concurrently but incur re-exploration cost (orange: orchestrator planning plus per-worker rediscovery of shared context) and alignment cost (red: fixing conflicts between independently produced outputs), making the total time exceed the sequential baseline.}
  \label{fig:motivation}
  \vspace{-10px}
\end{figure}

In this paper, we instead propose a parallelization criterion that makes this trade-off explicit and computable before execution. For a group of subtasks, the estimated serial cost is the sum of all subtask costs, while the estimated parallel cost consists of the slowest worker's execution cost together with the re-exploration and alignment costs. Parallel execution is selected only when the estimated parallel cost is strictly lower than the estimated serial cost. The natural unit for this criterion is wall-clock time, which is what users observe. However, predicting wall-clock time at planning time turns out to be unreliable for two reasons. First, it is backend-dependent (server load, batching, retries). Second, and more fundamentally, LLMs are systematically miscalibrated when asked to estimate their own task duration: they anchor on human-engineering norms (``this looks like a few-day task'') rather than on their own throughput, and their estimates exhibit substantial rank reversals relative to realized execution time, so even the relative ordering of task costs can flip between the estimate and reality. We therefore measure cost in predicted output tokens, a quantity an LLM can estimate well because it is a property of its own next response (the size of the artifact it would write) rather than the duration of an agentic process. For a fixed model and decoding configuration, tokens are emitted at an approximately constant rate, providing a backend-independent proxy for time. Unlike fixed or heuristic policies, this token-cost criterion is an explicit, layer-wise decision rule over the task DAG.

Based on this criterion, we propose SquidAgent, a cost-aware parallel multi-agent framework that realizes token-cost scheduling in practice. SquidAgent first decomposes a user request into a DAG and, in the same planning response, estimates token budgets for both subtask execution and layer-level alignment cost. It then applies the criterion to each topological layer, choosing parallel execution when the estimated parallel cost is lower than the estimated serial cost, and serial execution otherwise. To further reduce the hidden costs themselves, workers fork the orchestrator's conversation, inheriting global conventions and prior decisions. This reduces redundant reconstruction of the planning context, motivating the operational approximation $C_{\mathrm{exp}}\approx0$. The orchestrator also writes a shared convention block before each parallel layer, reducing subsequent inconsistency and making the upfront planning cost estimable. Code is available at: \url{https://github.com/tmllab/2026_NeurIPS_SquidAgent}.

Our main contributions are summarized as follows:
\begin{itemize}[leftmargin=*]

\item \textbf{Hidden-cost analysis of parallel multi-agent execution.}
We identify two operationally distinct sources of coordination overhead in parallel LLM-agent execution: (i) the \emph{re-exploration cost}, where parallel workers redundantly reconstruct the orchestrator’s planning context that a serial agent retains implicitly, and (ii) the \emph{alignment cost}, incurred when independently generated outputs must be reconciled into a globally consistent solution. We show that parallel execution can be counterproductive when these costs outweigh the benefit of concurrency.

\item \textbf{A token-cost criterion for adaptive parallelization.}
We observe that wall-clock time is the natural optimization target, but is poorly estimated by LLMs due to systematic miscalibration in predicting task duration. We instead use predicted output tokens as a proxy cost, which LLMs can estimate more reliably than wall-clock time as a property of their own responses and which is backend-independent. Based on this, we formalize parallelization as a cost-sensitive scheduling problem and derive a layer-wise criterion comparing (i) predicted serial cost (sum of subtasks) against (ii) predicted parallel cost (critical path plus re-exploration and alignment costs). The criterion is computed before execution and applied per topological layer of the task DAG.

\item \textbf{The SquidAgent framework.}
We instantiate this criterion in SquidAgent, a parallel multi-agent framework for cost-aware execution. Our method (i) estimates token budgets in a single planning pass, (ii) initializes workers from a shared orchestrator state to reduce redundant re-exploration, and (iii) writes a shared convention block to reduce alignment cost across parallel workers.
Across all nine evaluation tasks, SquidAgent achieves a 2.6$\times$ average wall-time speedup and a 2.2$\times$ average throughput improvement over Claude Code. It also achieves a 2.0$\times$ average throughput improvement over the strongest multi-agent baseline while obtaining the best overall task completion quality.
\end{itemize}

\section{Related Work}
\label{sec:related}

\paragraph{Multi-agent LLM systems.}
Multi-agent LLM systems decompose complex tasks into subtasks and solve them through coordinated agents~\citep{guo2024llmma}. 
Existing systems differ mainly in how they organize execution. 
Sequential pipelines, such as MetaGPT~\citep{hong2024metagpt}, SeqCV~\citep{yao2025can}, and OneFlow~\citep{xu2026rethinking}, execute subtasks step by step and preserve global consistency, but they sacrifice parallel speedup. 
Parallel systems, such as MacNet~\citep{qian2025scaling}, Flow~\citep{niu2025flow}, and LLMCompiler~\citep{kim2024llm}, dispatch independent subtasks to concurrent workers, but typically rely on fixed parallelization rules rather than an explicit cost-benefit criterion. 
Other frameworks, including AutoGen~\citep{wu2024autogen}, ChatDev~\citep{qian2024chatdev}, CAMEL~\citep{li2023camel}, and AgentVerse~\citep{chen2024agentverse}, provide flexible agent composition, but leave the execution policy largely to the developer. 
Debate-style systems~\citep{du2024improving,liang2024encouraging,chan2024chateval,chen2024reconcile} improve answer quality through iterative critique and consensus, which is orthogonal to our focus on execution scheduling.
{Consensus Matrix explores role-specialized multi-agent collaboration for structured decision-making in visual media workflows~\citep{zhang2026consensus}.}
Heterogeneous agent collaborative reinforcement learning shares verified rollouts during training while operating independently at inference time~\citep{zhang2026heterogeneous}. In contrast, we study how multiple workers should be coordinated during inference.

\paragraph{Graph-based task decomposition.}
Graph-structured reasoning and execution have been widely used to expose dependencies among intermediate steps. 
Tree-of-Thoughts~\citep{yao2023tot}, Graph-of-Thoughts~\citep{besta2024graph}, and Skeleton-of-Thought~\citep{ning2024skeletonofthought} structure single-agent reasoning as trees, graphs, or parallel skeleton expansion. 
In multi-agent settings, GPTSwarm~\citep{zhuge2024gptswarm}, AgentKit~\citep{wu2024agentkit}, TDAG~\citep{wang2025tdag}, and DynTaskMAS~\citep{yu2025dyntaskmas} represent tasks or agents as computational graphs with explicit dependencies. 
These graph representations naturally induce dependency layers that can serve as synchronization points. 
Our work uses this layer structure not merely for execution, but as the unit for deciding whether parallelization is worthwhile.

\paragraph{Scheduling and parallelization decisions.}
Existing systems typically use fixed serial execution, fixed parallel execution, or heuristic adaptive control. 
Fixed strategies ignore task-dependent coordination costs, while adaptive systems such as the Evolving Orchestrator~\citep{dang2026multiagent} and AgentConductor~\citep{wang2026agentconductor} adjust execution strategies without an explicit serial-versus-parallel cost criterion.
Adaptive reasoning termination~\citep{huang2026does} reduces computation within individual responses.
Related work on token efficiency~\citep{lin2026stop}, agent scheduling theory~\citep{wei2026agent}, and workflow optimization~\citep{yue2026static} highlights the importance of reducing agent execution cost, but does not provide an operational decision rule for LLM-agent parallelization. 
SquidAgent addresses this gap by comparing the estimated serial cost of each dependency layer with its effective parallel cost, including coordination overhead.

\section{Method}
\label{sec:method}

In this section, we present SquidAgent. 
We first define the task decomposition setting and notation (Section~\ref{sec:formulation}). 
We then derive a wall-clock parallelization criterion, explain why it is unreliable to estimate before execution, and introduce a token-cost criterion as a computable alternative (Section~\ref{sec:criterion}). 
Finally, we describe how SquidAgent implements this criterion through context forking, upfront convention planning, and deterministic layer-wise scheduling (Section~\ref{sec:system}).

\subsection{Setting and notation}
\label{sec:formulation}

Given a user request $\mathcal{R}$, an \emph{orchestrator} $\mathcal{O}$ decomposes $\mathcal{R}$ into a directed acyclic graph (DAG) $\mathcal{G}$ over subtasks $t_1,\ldots,t_n$, where edges encode dependencies. 
In the same planning step, the orchestrator also produces the cost estimates consumed by the scheduler:
\begin{equation}
  \bigl(\mathcal{G},\; \{\tau_i\}_{i=1}^{n},\; \{\widehat{C}_{\mathrm{alg}}(\mathcal{L}_k)\}_{k=1}^{L}\bigr)
  \;\sim\;
  \mathcal{O}(\mathcal{R}),
  \label{eq:decompose}
\end{equation}
where $\tau_i$ estimates the cost of completing subtask $t_i$, and $\widehat{C}_{\mathrm{alg}}(\mathcal{L}_k)$ estimates the alignment cost for layer $\mathcal{L}_k$, i.e., the effort needed to resolve inconsistencies among parallel outputs such as notation, terminology, interfaces, formatting, or cross-references.

The DAG is partitioned into topological layers $\mathcal{L}_1,\ldots,\mathcal{L}_L$, where every dependency of a task in $\mathcal{L}_k$ is satisfied by earlier layers. 
Tasks within the same layer are therefore eligible for parallel execution. 
For each layer, the scheduler chooses an execution mode $m_k \in \{\textsc{Serial}, \textsc{Parallel}\}$. 
Under \textsc{Serial}, the orchestrator completes the subtasks one after another; under \textsc{Parallel}, separate workers execute them concurrently.

\subsection{Cost decomposition and parallelization criterion}
\label{sec:criterion}

We first formulate the layer-wise cost in wall-clock time, and then replace it with a more tractable token-based proxy.

\paragraph{Layer cost in wall-clock time.}
Let $\tilde{\tau}_i$ denote the wall-clock time required to complete subtask $t_i$. 
Under serial execution, the layer cost is the sum of all subtask times:
\begin{equation}
  \mathrm{Time}_{\mathrm{ser}}(\mathcal{L}_k)
  \;\triangleq\;
  \sum_{i\in\mathcal{L}_k}\tilde{\tau}_i.
  \label{eq:time_serial}
\end{equation}
Under parallel execution, workers execute subtasks concurrently, so the execution cost is determined by the slowest worker. 
However, parallelism also introduces two additional costs. 
First, independent workers may need to rediscover planning context that the orchestrator has already established, incurring re-exploration time $\widetilde{C}_{\mathrm{exp}}(\mathcal{L}_k)$. 
Second, independently produced outputs may require reconciliation, incurring alignment time $\widetilde{C}_{\mathrm{alg}}(\mathcal{L}_k)$. 
Thus the parallel cost is
\begin{equation}
  \mathrm{Time}_{\mathrm{par}}(\mathcal{L}_k)
  \;\triangleq\;
  \underbrace{\max_{i\in\mathcal{L}_k}\tilde{\tau}_i}_{\text{critical path}}
  \;+\;
  \underbrace{\widetilde{C}_{\mathrm{exp}}(\mathcal{L}_k)}_{\text{re-exploration}}
  \;+\;
  \underbrace{\widetilde{C}_{\mathrm{alg}}(\mathcal{L}_k)}_{\text{alignment}}.
  \label{eq:time_parallel}
\end{equation}
A layer should be parallelized only when doing so reduces wall-clock time:
\begin{equation}
  m_k = \textsc{Parallel}
  \quad\Longleftrightarrow\quad
  \mathrm{Time}_{\mathrm{ser}}(\mathcal{L}_k)
  \;>\;
  \mathrm{Time}_{\mathrm{par}}(\mathcal{L}_k).
  \label{eq:time_criterion}
\end{equation}

\paragraph{Why estimating wall-clock time fails.}
Although Eq.~\ref{eq:time_criterion} is the desired decision rule, its terms cannot be reliably predicted before execution. 
Wall-clock time depends on backend conditions such as server load, batching, throttling, retries, and tool-call latency. 
LLMs are also poorly calibrated when asked to estimate their own execution duration: they tend to answer in human-engineering time rather than model-runtime time.
Such estimates may mis-rank subtasks relative to their realized execution times, since human-difficult tasks are not always slow for LLM agents, while long outputs, tool calls, and retries can dominate actual runtime.

\paragraph{From wall-clock time to output-token budgets.}
We address this issue by estimating output length rather than execution time.
Instead of asking ``how long will this task take?'', SquidAgent asks ``how many output tokens will this task require?'' This quantity is better calibrated because it reflects the size of the artifact the LLM will generate, rather than the duration of an agentic process. For a fixed model and decoding configuration, output tokens are produced at a roughly stable rate, making token count a practical proxy for relative execution time. Moreover, token budgets can be grounded in concrete deliverables, turning estimation into a content-length prediction rather than a runtime prediction.
This proxy is primarily suited to LLM-generation-dominated workflows and does not directly capture latency from tool execution, retrieval, or external API calls.

\paragraph{Token-cost criterion.}
We therefore measure every quantity in Eq.~\ref{eq:time_criterion} using predicted output tokens. 
Let $\tau_i$ be the predicted output-token budget for subtask $t_i$, and let $C_{\mathrm{exp}}(\mathcal{L}_k)$ and $C_{\mathrm{alg}}(\mathcal{L}_k)$ denote the token-valued re-exploration and alignment costs of layer $\mathcal{L}_k$. 
The serial and parallel token costs are
\begin{equation}
  T_{\mathrm{ser}}(\mathcal{L}_k)
  \;\triangleq\;
  \sum_{i\in\mathcal{L}_k}\tau_i,
  \label{eq:serial}
\end{equation}
\begin{equation}
  T_{\mathrm{par}}(\mathcal{L}_k)
  \;\triangleq\;
  \underbrace{\max_{i\in\mathcal{L}_k}\tau_i}_{\text{critical path}}
  \;+\;
  \underbrace{C_{\mathrm{exp}}(\mathcal{L}_k)}_{\text{re-exploration}}
  \;+\;
  \underbrace{C_{\mathrm{alg}}(\mathcal{L}_k)}_{\text{alignment}}.
  \label{eq:parallel_full}
\end{equation}
The decision rule becomes
\begin{equation}
  m_k = \textsc{Parallel}
  \quad\Longleftrightarrow\quad
  T_{\mathrm{ser}}(\mathcal{L}_k)
  \;>\;
  T_{\mathrm{par}}(\mathcal{L}_k).
  \label{eq:criterion_full}
\end{equation}
This criterion makes explicit why always parallelizing independent tasks can be suboptimal: the savings from replacing a sum with a maximum must exceed the additional re-exploration and alignment costs. 
Conversely, always-serial execution will miss potential speedups on layers where the parallel cost is lower than the serial cost.

\paragraph{Reducing the criterion to an operational form.}
Although token budgets are easier to estimate than wall-clock time, not all terms in Eq.~\ref{eq:criterion_full} are equally tractable. 
The per-task budgets $\tau_i$ can be estimated during decomposition, whereas $C_{\mathrm{exp}}(\mathcal{L}_k)$ and $C_{\mathrm{alg}}(\mathcal{L}_k)$ depend on emergent worker behavior in a naive parallel system.

SquidAgent reduces these difficult terms by design. 
First, workers fork the orchestrator's session and inherit prior reasoning, decisions, constraints, and partial plans. 
This shared context reduces duplicated re-exploration, allowing us to approximate $C_{\mathrm{exp}}(\mathcal{L}_k)\approx 0$. 
Second, before parallel execution, the orchestrator writes a layer-specific convention block that specifies shared assumptions, interfaces, formatting rules, and coordination constraints. 
This moves part of the alignment burden from post-hoc reconciliation to upfront planning, whose estimated cost is denoted by $\widehat{C}_{\mathrm{alg}}(\mathcal{L}_k)$. 
The effective parallel cost is therefore
\begin{equation}
  \widetilde{T}_{\mathrm{par}}(\mathcal{L}_k)
  \;\triangleq\;
  \max_{i\in\mathcal{L}_k}\tau_i
  \;+\;
  \widehat{C}_{\mathrm{alg}}(\mathcal{L}_k).
  \label{eq:parallel_effective}
\end{equation}

We then define the estimated layer-wise cost ratio as
\begin{equation}
  \widehat{\rho}_k
  \;\triangleq\;
  \frac{T_{\mathrm{ser}}(\mathcal{L}_k)}
       {\widetilde{T}_{\mathrm{par}}(\mathcal{L}_k)}.
  \label{eq:cost_ratio}
\end{equation}
A ratio above one indicates that parallel execution is predicted to be cheaper than serial execution. 
However, since $\{\tau_i\}$ and $\widehat{C}_{\mathrm{alg}}(\mathcal{L}_k)$ are noisy estimates rather than exact costs, deciding purely by $\widehat{\rho}_k>1$ can be vulnerable. 
SquidAgent therefore uses a safety margin $\alpha>1$:
\begin{equation}
  m_k = \textsc{Parallel}
  \quad\Longleftrightarrow\quad
  \widehat{\rho}_k > \alpha.
  \label{eq:margin}
\end{equation}
When $\widehat{\rho}_k\!\in\!(1,\alpha]$, the predicted advantage of parallel execution may be erased by estimation noise, so the layer is executed serially.

\paragraph{Robustness to ratio-estimation error.}
Let $\rho_k$ denote the true serial-to-parallel cost ratio. If $|\widehat{\rho}_k-\rho_k|\le\varepsilon$, then $\rho_k>\alpha+\varepsilon$ guarantees a parallel decision, while $\rho_k<\alpha-\varepsilon$ guarantees a serial decision. Estimation error can change the threshold decision only within $\varepsilon$ of $\alpha$. In particular, $\varepsilon<\alpha-1$ prevents incorrect parallelization of any layer with $\rho_k\le1$. The margin can nevertheless forgo modest parallel gains. Proposition~\ref{prop:ratio_robustness} in Appendix~\ref{app:robustness} gives the full statement and proof.

\begin{algorithm}[t]
\caption{SquidAgent: Layer-wise Scheduling and Execution}
\label{alg:costpar}
\KwIn{User request $\mathcal{R}$, safety margin $\alpha$}
\KwOut{Completed deliverables}
$\bigl(\mathcal{G}, \{\tau_i\}_{i=1}^{n}, 
\{\widehat{C}_{\mathrm{alg}}(\mathcal{L}_k)\}_{k=1}^{L}\bigr)
\sim \mathcal{O}(\mathcal{R})$\tcp*[r]{Decompose request and estimate costs}
Partition $\mathcal{G}$ into topological layers 
$\mathcal{L}_1,\ldots,\mathcal{L}_L$\tcp*[r]{Build dependency layers}
\For{$k = 1$ \KwTo $L$}{
  $\widehat{\rho}_k \gets
  \frac{\sum_{t_i\in\mathcal{L}_k}\tau_i}
  {\max_{t_i\in\mathcal{L}_k}\tau_i+
  \widehat{C}_{\mathrm{alg}}(\mathcal{L}_k)}$\tcp*[r]{Estimate cost ratio}
  \uIf{$\widehat{\rho}_k > \alpha$}{
    $m_k \gets \textsc{Parallel}$\tcp*[r]{Ratio exceeds margin}
    $\mathcal{O}$ writes conventions for $\mathcal{L}_k$\tcp*[r]{Front-load alignment}
    Workers execute $\{t_i:t_i\in\mathcal{L}_k\}$ with inherited context\tcp*[r]{Sharing context}
  }
  \Else{
    $m_k \gets \textsc{Serial}$\tcp*[r]{Margin not met}
    $\mathcal{O}$ executes $\{t_i:t_i\in\mathcal{L}_k\}$ sequentially\tcp*[r]{Preserve consistency}
  }
}
\Return completed deliverables\;
\end{algorithm}

\subsection{The SquidAgent framework}
\label{sec:system}

SquidAgent realizes Eq.~\ref{eq:margin} through three components: an orchestrator that produces both the task graph and the required token budgets, workers that inherit the orchestrator's context, and a deterministic scheduler that applies the criterion to each layer.

\paragraph{Orchestrator: planning with cost estimates.}
The orchestrator is prompted to decompose the request into a DAG and, in the same response, estimate the output-token cost of each subtask and the alignment cost of each dependency layer. 
Concretely, the structured output contains task-level fields such as
\texttt{\{id, depends\_on, estimated\_output\_tokens\}} and a layer-level field
\texttt{layer\_alignment\_tokens}. 
The field \texttt{estimated\_output\_tokens} corresponds to $\tau_i$, while \texttt{layer\_alignment\_tokens} provides $\widehat{C}_{\mathrm{alg}}(\mathcal{L}_k)$ for each layer. 
Because these estimates are produced together with the decomposition, SquidAgent does not require additional LLM calls for cost estimation. 
The alignment budget is estimated per layer rather than globally, since different layers can require very different amounts of shared convention planning. 
The complete prompt and schema are provided in Appendix~\ref{sec:prompts}.

\paragraph{Workers: context inheritance.}
A naive worker must reconstruct the global plan before doing useful work, which introduces duplicated exploration. 
SquidAgent avoids this by launching each worker from the orchestrator's session state, so the worker inherits the planning context, conventions, and architectural decisions already established by the orchestrator. 
It can therefore proceed directly to its assigned subtask rather than rediscovering global context. 
Context inheritance therefore reduces re-exploration cost to near zero, yielding $C_{\mathrm{exp}}(\mathcal{L}_k)\approx0$ in Eq.~\ref{eq:parallel_effective}. The component ablation in Section~\ref{sec:components} evaluates the contribution of this mechanism.

\paragraph{Upfront convention planning.}
Alignment cost is difficult to predict when it is left until after workers finish, because the inconsistencies among independently produced outputs are only revealed post hoc. 
SquidAgent reduces this uncertainty by specifying the shared conventions before parallel execution. 
For each layer considered for parallelization, the orchestrator writes a layer-specific policy that fixes notation, naming, interfaces, formatting rules, and cross-reference requirements. 
Workers inherit this policy through the same session fork, so many potential inconsistencies are constrained before they arise rather than repaired after completion. 
Upfront conventions reduce the need for post-hoc reconciliation, making alignment cost estimable before execution through the output-token cost of upfront convention planning. The operational criterion uses this estimate, $\widehat{C}_{\mathrm{alg}}(\mathcal{L}_k)$, as an approximation to alignment overhead.

\paragraph{Layer-wise scheduling.}
The scheduler converts the planned token budgets into execution decisions for each dependency layer. 
For a layer $\mathcal{L}_k$, it computes $\widehat{\rho}_k$ using Eq.~\ref{eq:margin} and selects \textsc{Parallel} only when $\widehat{\rho}_k>\alpha$; otherwise, it uses \textsc{Serial}. 
This applies the same cost criterion consistently across layers and avoids relying on an additional LLM call for the final scheduling decision. 
The full procedure is summarized in Algorithm~\ref{alg:costpar}.

\section{Experiments}
\label{sec:experiments}

We evaluate SquidAgent on 9 evaluation tasks spanning code generation, document authoring, and structured planning, and compare it with 7 baselines. We additionally isolate the three components (Section~\ref{sec:components}) and compare token and time prediction (Section~\ref{sec:prediction}). Additional experiments on margin stability across all nine tasks and transfer to external tasks are presented in Appendices~\ref{app:margin_stability} and~\ref{app:flow_transfer}, respectively. We describe the experimental setup (Section~\ref{sec:setup}), report throughput results (Section~\ref{sec:main_results}), evaluate output quality (Section~\ref{sec:quality}), and analyze the scheduler's sensitivity to ratio estimation and the margin parameter $\alpha$ (Section~\ref{sec:ablation}). Layer-wise scheduler decisions are detailed in Appendix~\ref{sec:analysis}. Visualizations of the results generated by all methods are provided at: \url{https://yexionglin.github.io/SquidAgent_Demo}.

\subsection{Setup}
\label{sec:setup}

\paragraph{Model.}
All methods use Claude Sonnet (\texttt{claude-sonnet-4-6}) with thinking effort set to \texttt{medium}. Unless otherwise specified, all methods are run with the same tool-access setting and the same task specification.

\paragraph{Evaluation tasks.}
We design a suite of 9 evaluation tasks spanning code generation, technical writing, and structured planning (Table~\ref{tab:evaluation_tasks}; full prompts in Appendix~\ref{app:evaluation_tasks}). 
The tasks are designed to expose different coordination challenges faced by multi-agent systems, including independent module parallelism, shared interface constraints, cross-file consistency, long-form generation, and dependencies across modules.

\emph{Heavy tasks} contain 8--27 files with 300--900 lines per file:
1)~\textbf{PixelCraft}, a Python/Pygame survival game testing parallel implementation of loosely coupled game systems;
2)~\textbf{ShopFlow}, a full-stack e-commerce application testing consistency across backend APIs, authentication, admin logic, and frontend pages;
3)~\textbf{CompressKit}, a compression toolkit testing shared data-format constraints across multiple algorithms;
4)~\textbf{ArcadeBox}, a collection of HTML5 Canvas games testing high parallelism across mostly independent modules;
5)~\textbf{SlideKit}, ML workshop slide decks and handouts testing long-form generation with cross-document consistency;
6)~\textbf{ClimateAnalysis}, analysis scripts and reports testing consistency between computed results and written summaries.

\emph{Medium tasks} contain 6--10 files with 150--600 lines per file:
7)~\textbf{LinAlgBook}, a linear algebra tutorial testing structured mathematical writing;
8)~\textbf{MathRef}, a discrete mathematics reference testing tightly coupled cross-references;
9)~\textbf{CloudDocs}, API documentation testing consistency between module-level references and cross-cutting guides.

\begin{table*}[t]
  \caption{Deliverable throughput (words/s) across 9 evaluation tasks. Computed as deliverable output divided by wall time. Higher is better. Bold indicates the best result per task.}
  \label{tab:throughput}
  \centering
  \small
  \setlength{\tabcolsep}{3pt}
  \begin{tabular}{l@{\hskip 4pt}cccccc@{\hskip 4pt}ccc@{\hskip 4pt}c}
    \toprule
    & \multicolumn{6}{c}{Heavy} & \multicolumn{3}{c}{Medium} & \\
    \cmidrule(lr){2-7} \cmidrule(lr){8-10}
    Method & Pixel. & Shop. & Comp. & Arcade. & Slide. & Climate. & LinAlg. & Math. & Cloud. & Overall \\
    \midrule
    SquidAgent & \textbf{21.0} & \textbf{35.1} & \textbf{16.1} & \textbf{52.6} & \textbf{47.7} & \textbf{27.7} & \textbf{46.3} & \textbf{48.1} & \textbf{48.2} & \textbf{38.1$\pm$12.8} \\
    \midrule
    Claude Code & 5.0 & 16.7 & 3.7 & 17.5 & 19.9 & 13.5 & 22.6 & 26.7 & 29.6 & 17.2$\pm$8.3 \\
    SeqCV & 5.3 & 15.6 & 4.0 & 14.0 & 18.1 & 15.8 & 21.8 & 27.9 & 15.6 & 15.3$\pm$7.0 \\
    MetaGPT & 12.3 & 16.3 & 6.9 & 8.7 & 9.7 & 11.4 & 20.2 & 22.4 & 14.3 & 13.6$\pm$4.9 \\
    AFlow & 4.2 & 8.8 & 3.7 & 14.8 & 12.7 & 9.5 & 16.8 & 18.3 & 15.4 & 11.6$\pm$5.0 \\
    Flow & 4.8 & 13.2 & 3.5 & 12.9 & 10.8 & 10.9 & 19.3 & 21.9 & 23.6 & 13.4$\pm$6.6 \\
    MacNet & 11.4 & 12.3 & 9.1 & 15.9 & 13.5 & 14.1 & 18.4 & 24.3 & 24.1 & 15.9$\pm$5.1 \\
    AgentConductor & 13.1 & 15.0 & 10.9 & 18.7 & 21.1 & 16.7 & 23.7 & 27.3 & 26.5 & 19.2$\pm$5.5 \\
    \bottomrule
  \end{tabular}
  \caption{Quality scores (\%) across 9 evaluation tasks. Each rubric item (21--37 per task) is scored binary YES/NO by Claude Opus 4.6; scores are normalized by per-task item count.}
  \label{tab:quality}
  \centering
  \small
  \setlength{\tabcolsep}{3pt}
  \begin{tabular}{l@{\hskip 4pt}cccccc@{\hskip 4pt}ccc@{\hskip 4pt}c}
    \toprule
    & \multicolumn{6}{c}{Heavy} & \multicolumn{3}{c}{Medium} & \\
    \cmidrule(lr){2-7} \cmidrule(lr){8-10}
    Method & Pixel. & Shop. & Comp. & Arcade. & Slide. & Climate. & LinAlg. & Math. & Cloud. & Overall \\
    \midrule
    SquidAgent & \textbf{100.0} & 96.7 & 95.8 & \textbf{100.0} & 95.5 & \textbf{96.2} & \textbf{100.0} & \textbf{100.0} & \textbf{100.0} & \textbf{98.2$\pm$2.1} \\
    Claude Code & \textbf{100.0} & 96.7 & \textbf{100.0} & 92.9 & \textbf{100.0} & 92.3 & \textbf{100.0} & 96.2 & 95.7 & 97.1$\pm$3.1 \\
    SeqCV & 94.6 & \textbf{100.0} & \textbf{100.0} & 92.9 & 81.8 & \textbf{96.2} & 90.5 & 92.3 & 91.3 & 93.3$\pm$5.5 \\
    MetaGPT & 97.3 & \textbf{100.0} & \textbf{100.0} & 96.4 & 31.8 & 38.5 & \textbf{100.0} & \textbf{100.0} & 95.7 & 84.4$\pm$28.0 \\
    AFlow & 97.3 & 76.7 & 95.8 & 67.9 & 27.3 & 38.5 & 95.2 & 46.2 & 73.9 & 68.7$\pm$26.2 \\
    Flow & 70.3 & 36.7 & \textbf{100.0} & 35.7 & \textbf{100.0} & 92.3 & \textbf{100.0} & \textbf{100.0} & 95.7 & 81.2$\pm$27.2 \\
    MacNet & 78.4 & 63.3 & 58.3 & 50.0 & 50.0 & 19.2 & 90.5 & 88.5 & 60.9 & 62.1$\pm$22.1 \\
    AgentConductor & 78.4 & 96.7 & 62.5 & 28.6 & \textbf{100.0} & 84.6 & \textbf{100.0} & 88.5 & 95.7 & 81.6$\pm$23.3 \\
    \bottomrule
  \end{tabular}
  \vspace{-10px}
\end{table*}

\paragraph{Baselines.}
We compare SquidAgent with seven representative baselines spanning single-agent execution, sequential verification, role-based workflows, workflow search, parallel execution, and adaptive coordination. 
\textbf{Claude Code} is a baseline using raw \texttt{claude -p} without additional orchestration instructions. 
\textbf{SeqCV}~\citep{yao2025can} performs sequential execution with cross-model verification. 
\textbf{MetaGPT}~\citep{hong2024metagpt} follows a role-based workflow with predefined agents such as product manager, architect, and engineer. 
\textbf{AFlow}~\citep{zhang2025aflow} searches over operator pipelines and executes the selected workflow. 
\textbf{Flow}~\citep{niu2025flow} uses git-based orchestration with parallel workers in isolated worktrees. 
\textbf{MacNet}~\citep{qian2025scaling} organizes agents in a DAG-style topology with draft-review interactions. 
\textbf{AgentConductor}~\citep{wang2026agentconductor} adaptively adjusts the coordination structure according to task difficulty.

\paragraph{Protocol and metrics.}
All methods receive the same task specification per task. We set $\alpha=1.9$ for SquidAgent. We report wall time (seconds, from first model invocation to final deliverable) and deliverable throughput (words/s, excluding intermediate artifacts). Output quality is evaluated separately in Section~\ref{sec:quality}.

\begin{figure}[t]
\vspace{-10px}
\centering
\begin{subfigure}[b]{0.44\linewidth}
  \centering
  \includegraphics[width=\linewidth]{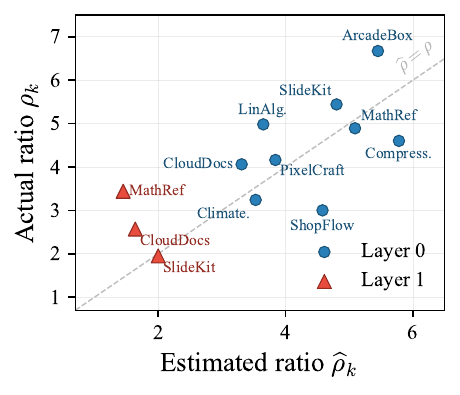}
  \vspace{-20px}
  \caption{Ratio estimation accuracy.}
  \label{fig:ratio-estimation}
\end{subfigure}
\begin{subfigure}[b]{0.44\linewidth}
  \centering
  \includegraphics[width=\linewidth]{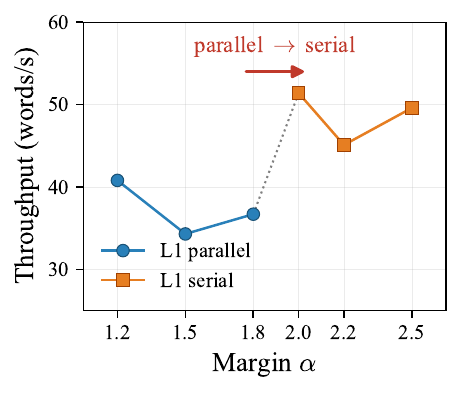}
  \vspace{-20px}
  \caption{Sensitivity to $\alpha$ (MathRef).}
  \label{fig:alpha-sensitivity}
\end{subfigure}
\caption{Scheduler sensitivity analysis.
(a)~Estimated cost ratio $\widehat{\rho}_k$ versus actual cost ratio measured after execution, across 14 multi-task dependency layers from 9 evaluation tasks. 
Layer~0 and Layer~1 correspond to the first and second dependency layers in the orchestrator-generated task DAG. 
(b)~Throughput on MathRef under different safety margins $\alpha$, with the task DAG and token estimates held fixed. 
For $\alpha \ge 2.0$, Layer~1 changes from parallel to serial execution, which improves throughput because the alignment cost of cross-referencing appendices exceeds the parallelism benefit.}
\label{fig:scheduler-analysis}
\vspace{-10px}
\end{figure}

\subsection{Throughput results}
\label{sec:main_results}

Table~\ref{tab:throughput} reports deliverable throughput on the nine evaluation tasks, measured as the number of delivered words divided by wall-clock time. 
Detailed wall-clock time and output-size statistics are provided in Appendix~\ref{app:more_results}. 
Across all tasks, SquidAgent achieves the highest throughput, while producing deliverables of comparable or larger size than the baselines.

The improvement is most evident on large tasks, where the task structure exposes substantial independent work that can be executed in parallel. 
For example, SquidAgent improves throughput over the strongest baseline by 2.8$\times$ on ArcadeBox and 2.3$\times$ on SlideKit. 
On medium tasks, the absolute gains are smaller because there are fewer independent subtasks to parallelize, but SquidAgent still achieves the best throughput on every task. 
Overall, SquidAgent increases mean throughput from 17.2 to 38.1 words/s compared with Claude Code, corresponding to a 2.2$\times$ speedup, and from 19.2 to 38.1 words/s compared with the strongest multi-agent baseline, AgentConductor, corresponding to a 2.0$\times$ speedup. 
These results show that SquidAgent improves efficiency not merely by increasing parallelism, but by parallelizing only when the estimated serial cost exceeds the effective parallel cost, including coordination overhead.

\subsection{Quality evaluation}
\label{sec:quality}

Table~\ref{tab:quality} reports output quality using task-specific binary rubrics scored by Claude Opus 4.6. 
Each rubric covers task compliance, domain-specific correctness, and overall polish, with 21--37 items per task; full rubrics are provided in Appendix~\ref{app:rubrics}.

SquidAgent achieves the highest overall score, $98.2\pm2.1$\%, reaching 100\% on 5 of 9 evaluation tasks. 
Claude Code is also strong ($97.1\pm3.1$\%) but substantially slower. 
Several multi-agent baselines show larger variance, with failures on tasks requiring complete deliverables and consistent cross-file organization. 
These results suggest that SquidAgent's throughput gains do not come at the expense of deliverable quality.

\begingroup
\subsection{Component-wise ablation}
\label{sec:components}

We individually ablate the three components on four tasks, which the orchestrator decomposes into multi-layer DAGs (Table~\ref{tab:component_ablation}). \emph{Without scheduling} parallelizes all eligible multi-task layers. \emph{Without session forking} removes inherited context. \emph{Without convention planning} leaves consistency correction to the final review. The other two components remain enabled. All three removals lower throughput; removing scheduling incurs a 32.1\% decrease. Their effects are not additive.

\begin{table}[t]
\centering
\small
\caption{Component-wise ablation: deliverable throughput (words/s) on four tasks. Higher is better.}
\label{tab:component_ablation}
\setlength{\tabcolsep}{3pt}
\begin{tabular}{lrrrrr}
\toprule
Method & Math. & Cloud. & Slide. & Climate. & Mean \\
\midrule
SquidAgent & \textbf{48.10} & \textbf{48.20} & \textbf{47.70} & \textbf{27.70} & \textbf{42.93} \\
Without scheduling & 35.54 & 31.70 & 31.37 & 17.96 & 29.14 \\
Without session forking & 41.22 & 33.02 & 30.26 & 19.04 & 30.88 \\
Without convention planning & 39.90 & 35.33 & 32.78 & 22.50 & 32.63 \\
\bottomrule
\end{tabular}
\vspace{-10px}
\end{table}

\subsection{Output-token versus wall-clock prediction}
\label{sec:prediction}

\setlength{\intextsep}{4pt}
\begin{wraptable}{r}{0.43\textwidth}
\vspace{0pt}
\captionsetup{font=small,skip=4pt}
\centering
\small
\caption{Prediction--observation correlations.}
\label{tab:prediction_correlations}
\setlength{\tabcolsep}{5pt}
\begin{tabular}{lcc}
\toprule
Prediction & $\rho$ & $r_{\log}$ \\
\midrule
Output tokens & \textbf{0.77} & \textbf{0.77} \\
Wall-clock time & 0.16 & 0.11 \\
\bottomrule
\end{tabular}
\end{wraptable}
For each of the 59 subtasks from nine tasks, we independently predict output-token count and wall-clock time five times each, and compare the median prediction for each quantity with its observed value from the recorded execution. The DAG, model, backend, and prompt structure are held fixed. Table~\ref{tab:prediction_correlations} reports Spearman's rank correlation coefficient $\rho$ between predicted and observed values, and Pearson's correlation coefficient $r_{\log}$ between their log-transformed values. Token predictions correlate more strongly with observations, supporting token-based costs in these LLM-dominated workflows. Appendix~\ref{app:prediction_details} provides additional rank-ordering analysis.
\par
\endgroup

\subsection{Scheduler sensitivity analysis}
\label{sec:ablation}

\paragraph{Ratio estimation accuracy.}
Figure~\ref{fig:scheduler-analysis}(a) compares the scheduler's estimated ratio $\hat{\rho}_k$ against the actual ratio $\rho_k$ across all 14 multi-task dependency layers. Generally, these ratios cluster near the diagonal, indicating reliable estimation. 

\paragraph{Sensitivity to $\alpha$.}
We vary $\alpha$ on MathRef. The orchestrator generate a 2-layer DAG, where Layer 0 (L0) contain 6 chapters and  Layer 1 (L1) contain 4 cross-referencing appendices. To isolate the effect of $\alpha$, we fix the task DAG, token estimates, and $\widehat{C}_{\mathrm{alg}}(\mathcal{L}_k)$ across all runs. Figure~\ref{fig:scheduler-analysis}(b) shows that at $\alpha \le 1.8$ the scheduler parallelizes L1, yielding 34--41 words/s. At $\alpha \ge 2.0$ it serializes L1, improving throughput to 45--51 words/s. The transition occurs between $\alpha=1.8$ and $\alpha=2.0$, and we set $\alpha=1.9$ as the default, which correctly serializes L1 for this tightly coupled layer while preserving parallelization for layers with higher ratios.

\section{Conclusion}

We present SquidAgent, a cost-aware scheduler for multi-agent LLM systems. 
Parallel execution can introduce re-exploration and alignment costs that offset its speedup when subtasks are tightly coupled. 
SquidAgent captures this trade-off with a token-cost criterion and decides, for each dependency layer, whether to run serially or in parallel. 
It further reduces coordination cost through context inheritance and upfront convention planning. 
Across nine evaluation tasks, SquidAgent improves mean throughput by 2.2$\times$ over Claude Code and 2.0$\times$ over the strongest multi-agent baseline, while maintaining competitive quality. 
These results highlight cost-aware scheduling as a practical principle for efficient multi-agent LLM execution.

SquidAgent has three main limitations. First, its fixed safety margin does not adapt to uncertainty in token estimates. Second, its token proxy does not capture tool or API latency. Third, SquidAgent makes a single serial-or-parallel decision for each layer, even when some tasks would benefit from parallel execution while others would be better run serially. Appendix~\ref{app:limitations} discusses these limitations and future directions.

\begin{ack}
Tongliang Liu is partially supported by the following Australian Research Council projects: DP260102466, FT220100318, DP220102121, LP220100527, LP220200949.
Zhen Fang is partially supported by the Australian Research Council project DE250100363.
Yu Yao is partially supported by the following Australian Research Council projects: DE260101993, DP260102466.
Bo Han is partially supported by RGC Research Fellow Scheme No. RFS2627-2S02, RGC Young Collaborative Research Grant No. C2005-24Y, RGC General Research Fund No. 12202026, RGC General Research Fund No. 12200725, NSFC Major Research Plan No. 92570109, NSFC General Program No. 62376235, HKBU Faculty Niche Research Areas No. RC-FNRA-IG/25-26/SCI/05, and HKBU CSD Departmental Incentive Scheme.
\end{ack}

\bibliography{references}
\bibliographystyle{plainnat}

\newpage
\appendix
\begin{flushleft}
	{\LARGE\textsc{Appendix}}
\end{flushleft}

\etocdepthtag.toc{mtappendix}
    \etocsettagdepth{mtchapter}{none}
    \etocsettagdepth{mtappendix}{subsection}
    
    {
        \hypersetup{linkcolor=black}
    	\footnotesize\tableofcontents
    }

\newpage

\section{Evaluation Tasks}
\label{app:evaluation_tasks}

Table~\ref{tab:evaluation_tasks} summarizes the nine evaluation tasks used in our experiments. 
The tasks are grouped by output scale and designed to cover different coordination requirements, including independent module generation, shared interfaces, cross-file consistency, long-form document construction, and multi-stage dependencies. 
For each task, all methods receive exactly the same user prompt. 
No additional implementation hints, file-structure constraints, decomposition guidance, or style specifications are provided.

\begin{table}[h]
  \caption{Characteristics of the evaluation tasks. Large tasks require substantial software systems or document collections, while medium tasks require smaller but still multi-part outputs with consistency constraints.}
  \label{tab:evaluation_tasks}
  \centering
  \small
  \resizebox{\textwidth}{!}{
  \begin{tabular}{llll}
    \toprule
    Task & Output type & Main coordination challenge & Scale \\
    \midrule
    PixelCraft & Python/Pygame & Independent game modules with shared runtime logic & Large \\
    ShopFlow & Flask + HTML/JS & Backend--frontend consistency and route integration & Large \\
    CompressKit & Python & Modular utilities with shared APIs and tests & Large \\
    ArcadeBox & HTML5 Canvas/JS & Multiple interactive components under one interface & Large \\
    SlideKit & HTML & Consistent slide structure, styling, and navigation & Large \\
    ClimateAnalysis & Python + Markdown & Multi-stage analysis, reporting, and cross-reference consistency & Large \\
    \midrule
    LinAlgBook & Markdown & Long-form chapter consistency and notation reuse & Medium \\
    MathRef & Markdown & Cross-referenced appendices and shared mathematical notation & Medium \\
    CloudDocs & Markdown & Structured technical documentation with consistent terminology & Medium \\
    \bottomrule
  \end{tabular}
  }
\end{table}

We provide the complete user prompt for each evaluation task below. 
Each prompt is used as the sole input to every method, ensuring that differences in performance come from the execution strategy rather than from task-specific guidance.

\subsubsection*{PixelCraft}
\begin{tcolorbox}[colback=gray!5, colframe=gray!60, boxrule=0.5pt, breakable]
{\small
Build a 2D pixel survival game inspired by Minecraft using Python and Pygame. The game should be a side-scrolling 2D platformer view where the player can explore, mine blocks, collect resources, craft items, fight enemies, and survive through day-night cycles.

\textbf{Core Features:}
\textit{World:} Procedurally generated open world (at least 256$\times$256 blocks), multiple biomes (forest, desert, snow, plains), underground caves with ore deposits, natural structures.
\textit{Blocks and Resources:} At least 15 block types, each with visual appearance, mining hardness, and drop items.
\textit{Player:} Movement with jump and gravity physics, collision detection, health/hunger systems, mining, placing blocks, attacking enemies.
\textit{Inventory and Crafting:} 36-slot inventory with hotbar, drag-and-drop, 3$\times$3 crafting grid with 15+ recipes.
\textit{Enemies:} 3+ enemy types with basic AI (chase, attack, wander, flee).
\textit{Day-Night Cycle:} Full cycle over 10 minutes, sky transitions, darkness overlay, torch lighting.
\textit{UI:} Health/hunger bars, hotbar, minimap, debug info, inventory overlay.
\textit{Save/Load:} JSON save with compression, multiple slots, auto-save.

\textbf{Constraints:} Python 3.10+ with Pygame only. No image/sound assets---use colored rectangles. 1024$\times$768 window, 60 FPS. Single entry point script.
}
\end{tcolorbox}

\subsubsection*{ShopFlow}
\begin{tcolorbox}[colback=gray!5, colframe=gray!60, boxrule=0.5pt, breakable]
{\small
Build a full-stack e-commerce platform with a Python backend and web frontend. Complete online store with product browsing, shopping cart, orders, and admin dashboard.

\textbf{Core Features:}
\textit{Product Catalog:} Browse with pagination, filter by category/price/rating, search, sort, product detail pages with reviews.
\textit{Shopping Cart:} Add/update/remove items, subtotals, persist across navigation, stock checking.
\textit{User Accounts:} Registration, login/logout with sessions, order history, product reviews.
\textit{Checkout and Orders:} Shipping address form, order confirmation, payment simulation, status tracking, cancellation.
\textit{Admin Dashboard:} Statistics overview, order management, product CRUD, sales analytics.
\textit{Frontend:} Responsive design, navigation with search bar and cart icon, all data loaded dynamically from API.

\textbf{Technical Requirements:} Flask + SQLAlchemy (SQLite). Vanilla HTML/CSS/JS only. JSON REST APIs with fetch(). Session-based auth. Database seed script with 20+ products. Single command launch (\texttt{python app.py}).

\textbf{Quality:} Production-quality error handling, input validation, security (password hashing, CSRF-safe sessions). Fully functional frontend. Working admin analytics with CSS-based charts. 600+ lines of CSS.
}
\end{tcolorbox}

\subsubsection*{CompressKit}
\begin{tcolorbox}[colback=gray!5, colframe=gray!60, boxrule=0.5pt, breakable]
{\small
Build a multi-algorithm file compression toolkit in Python implementing several compression algorithms from scratch.

\textbf{Algorithms} (all from scratch, no external compression libraries): Huffman coding, LZ77, LZW, Burrows-Wheeler Transform with Move-to-Front encoding, Run-Length Encoding, Arithmetic coding, Deflate (LZ77 + Huffman). Each must support compress/decompress with round-trip correctness.

\textbf{Archive Format:} Custom format for bundling multiple files. Store filenames, sizes, checksums. Support directories. Integrity verification on extraction.

\textbf{Data Integrity:} CRC32 checksum (lookup table), Adler32 rolling checksum, verification on decompression.

\textbf{CLI:} Compress/decompress single files, create/extract archives, benchmark mode (all algorithms on a file with compression ratio, speed, correctness), show metadata.

\textbf{Self-Test:} Generate various data patterns and verify all algorithms produce correct round-trip results.

\textbf{Constraints:} Python 3.10+, no external libraries. Handle edge cases (empty input, single byte, identical bytes, random bytes). Bit-level I/O utility.
}
\end{tcolorbox}

\subsubsection*{ArcadeBox}
\begin{tcolorbox}[colback=gray!5, colframe=gray!60, boxrule=0.5pt, breakable]
{\small
Build a browser-based game arcade with 8 playable games. Main menu for game selection, HTML5 Canvas rendering, persistent high scores in localStorage, global statistics.

\textbf{Games:} (1)~Tetris: 10$\times$20 grid, 7 tetrominoes, SRS rotation, wall kicks, ghost piece, scoring 100/300/500/800. (2)~Minesweeper: 3 difficulties, flood-fill reveal, first-click safe. (3)~2048: 4$\times$4 grid, smooth animations, 90\%/10\% tile spawn. (4)~Snake: growing snake, increasing speed. (5)~Breakout: paddle/ball/bricks, AABB collision, 3 lives. (6)~Sudoku: puzzle generator via backtracking, 3 difficulties, pencil marks, conflict highlighting. (7)~Memory Match: 3 grid sizes, flip animation, move/time tracking. (8)~Gomoku: 15$\times$15 board, minimax AI with alpha-beta pruning (depth 3--4).

\textbf{Requirements:} Pure HTML/CSS/JS, no frameworks or external assets. Canvas rendering at 60 FPS. Pause, restart, back-to-menu for each game. Each game file 300+ lines. Works by opening \texttt{index.html} directly.
}
\end{tcolorbox}

\subsubsection*{SlideKit}
\begin{tcolorbox}[colback=gray!5, colframe=gray!60, boxrule=0.5pt, breakable]
{\small
Create HTML slide presentations for a 2-day ML workshop. Each presentation is a standalone HTML file with inline CSS/JS and slide navigation.

\textbf{Main presentations} (20--30 slides, 500--700 lines each): (1)~\texttt{day1\_foundations.html}: ML foundations---supervised/unsupervised, bias-variance, cross-validation, metrics. (2)~\texttt{day1\_linear.html}: Linear models---regression, logistic regression, regularization, SVMs. (3)~\texttt{day2\_trees.html}: Tree methods---decision trees, random forests, gradient boosting. (4)~\texttt{day2\_neural.html}: Neural networks---backpropagation, CNNs, RNNs, transfer learning.

\textbf{Summary materials} (reference ALL presentations, must use identical notation): (5)~\texttt{quick\_reference.html}: Key formulas citing exact slide numbers, ``Practice: Exercise X.Y'' pointers. (6)~\texttt{exercise\_booklet.html}: 12 exercises (3 per session), numbered X.Y, ``See: \S X'' pointers, solutions with exact source notation.

\textbf{Master index} (depends on all files): (7)~\texttt{workshop\_index.html}: Every technical term with definition, links to quick-reference and exercises.

All 7 files share the same CSS color scheme and layout conventions.
}
\end{tcolorbox}

\subsubsection*{ClimateAnalysis}
\begin{tcolorbox}[colback=gray!5, colframe=gray!60, boxrule=0.5pt, breakable]
{\small
Create a data analysis project investigating global climate trends. Analysis scripts and summary reports.

\textbf{Analysis scripts} (standalone Python, pure Python with no pandas/numpy): (1)~\texttt{temperature\_analysis.py}: Temperature anomalies, annual means, moving averages, linear regression from scratch, seasonal decomposition. (2)~\texttt{precipitation\_analysis.py}: 6-region precipitation, drought detection, inter-region correlation, Mann-Kendall trend test from scratch. (3)~\texttt{co2\_analysis.py}: CO2 concentration, growth rates, seasonal cycle, exponential curve fitting, projections. (4)~\texttt{sea\_level\_analysis.py}: 5-station tide gauge data, station trends, acceleration detection (quadratic vs linear), confidence intervals.

(5)~\texttt{correlation\_analysis.py}: Cross-variable correlations importing data from all 4 scripts above, Pearson/Spearman correlations, lag analysis, Granger causality from scratch.

\textbf{Summary reports} (every number must match script outputs): (6)~\texttt{executive\_summary.md}: Key findings with exact values. (7)~\texttt{methodology.md}: Statistical methods with exact formulas as implemented. (8)~\texttt{data\_dictionary.md}: All variables, column names, units, ranges matching scripts exactly.

Consistent variable naming, ISO 8601 dates, standard units throughout.
}
\end{tcolorbox}

\subsubsection*{LinAlgBook}
\begin{tcolorbox}[colback=gray!5, colframe=gray!60, boxrule=0.5pt, breakable]
{\small
Write a beginner-friendly linear algebra tutorial for first-year undergraduates. Six markdown files:

(1)~\texttt{vectors.md}: Vector spaces, operations, linear independence, spanning sets. (2)~\texttt{matrix\_ops.md}: Matrix multiplication, inverse, transpose, row reduction. (3)~\texttt{determinants.md}: Determinant definition, properties, cofactor expansion, Cramer's rule. (4)~\texttt{eigen.md}: Eigenvalues, eigenvectors, diagonalization, characteristic polynomial. (5)~\texttt{svd.md}: Singular value decomposition, low-rank approximation, applications. (6)~\texttt{pca.md}: Principal component analysis, covariance matrix, dimensionality reduction.

Each file: 800--1200 words, 2+ LaTeX worked examples. Cross-chapter notation consistency is critical: vector notation, default orientation, subscript start, transpose symbol, norm brackets, matrix size convention, eigenvalue ordering.
}
\end{tcolorbox}

\subsubsection*{MathRef}
\begin{tcolorbox}[colback=gray!5, colframe=gray!60, boxrule=0.5pt, breakable]
{\small
Write a comprehensive discrete mathematics textbook for undergraduates. Six chapter files (3000--4000 words, 5+ LaTeX proofs each): propositional and predicate logic, set theory, relations, functions, combinatorics, graph theory.

After all chapters, create four reference appendices (400--600 words each): (1)~Symbol table listing every mathematical symbol by chapter with exact LaTeX commands. (2)~Theorem index with exact names, numbers, and statements. (3)~Cross-reference guide citing exact theorem numbers across chapters. (4)~Proof techniques summary citing where each technique is demonstrated.

All chapters and appendices must use consistent notation: same empty set symbol, same function composition notation, same theorem numbering format (Theorem X.Y), same proof block structure. Appendices aggregate from all chapters---any deviation in symbol usage or naming breaks cross-referencing.
}
\end{tcolorbox}

\subsubsection*{CloudDocs}
\begin{tcolorbox}[colback=gray!5, colframe=gray!60, boxrule=0.5pt, breakable]
{\small
Write complete API documentation for a fictional cloud storage service ``CloudVault''. Module-level references and cross-cutting guides.

\textbf{Module API references:} (1)~\texttt{auth\_api.md}: Authentication---OAuth2 flows, JWT tokens, scopes, rate limiting. 15+ endpoints, 2500--3500 words. (2)~\texttt{storage\_api.md}: File storage---upload (multipart, resumable, chunked), download, metadata CRUD, versioning, sharing. 20+ endpoints, 3000--4000 words. (3)~\texttt{search\_api.md}: Search---full-text, metadata filters, cursor pagination, saved searches. 10+ endpoints, 2000--3000 words. (4)~\texttt{admin\_api.md}: Administration---user CRUD, roles, quotas, analytics, audit logs, billing, webhooks. 15+ endpoints, 2500--3500 words. (5)~\texttt{webhook\_api.md}: Events---event types, registration, payload schemas, retry policy, signature verification. 10+ endpoints, 2000--3000 words.

\textbf{Cross-cutting guides} (reference ALL modules---every endpoint, error code, schema name must match): (6)~\texttt{quickstart.md}: Getting started walkthrough. (7)~\texttt{error\_reference.md}: All error codes aggregated from all modules. (8)~\texttt{migration\_guide.md}: v1 to v2 breaking changes.

Consistent formatting: \texttt{METHOD /api/v2/path}, PascalCase types, camelCase fields, \texttt{RESOURCE\_ACTION\_ERROR} codes.
}
\end{tcolorbox}

\section{Additional experiment results}
\label{app:more_results}

Table~\ref{tab:walltime} reports wall time (seconds from first model invocation to final deliverable) and Table~\ref{tab:words} reports deliverable output in words. Deliverable output counts only final user-facing files (source code, content documents, HTML pages), excluding intermediate artifacts such as planning documents, log files, build outputs, and configuration files. The throughput metric reported in the main text (Table~\ref{tab:throughput}) is computed as the ratio of deliverable output to wall time.

\begin{table*}[h]
  \caption{Wall time (seconds) across 9 evaluation tasks. Lower is better.}
  \label{tab:walltime}
  \centering
  \small
  \setlength{\tabcolsep}{3pt}
  \begin{tabular}{l@{\hskip 4pt}cccccc@{\hskip 4pt}ccc@{\hskip 4pt}c}
    \toprule
    & \multicolumn{6}{c}{Heavy} & \multicolumn{3}{c}{Medium} & \\
    \cmidrule(lr){2-7} \cmidrule(lr){8-10}
    Method & Pixel. & Shop. & Comp. & Arcade. & Slide. & Climate. & LinAlg. & Math. & Cloud. & Overall \\
    \midrule
    SquidAgent & 580 & 404 & 344 & 439 & 1022 & 464 & 147 & 566 & 383 & 483$\pm$239 \\
    \midrule
    Claude Code & 1999 & 1000 & 1790 & 1105 & 2297 & 1000 & 529 & 757 & 614 & 1232$\pm$638 \\
    SeqCV & 1115 & 2231 & 2780 & 2620 & 3127 & 2214 & 863 & 2243 & 3490 & 2298$\pm$860 \\
    MetaGPT & 3356 & 2396 & 3696 & 4412 & 3301 & 1843 & 479 & 969 & 897 & 2372$\pm$1403 \\
    AFlow & 1353 & 534 & 949 & 395 & 168 & 189 & 307 & 401 & 307 & 511$\pm$392 \\
    Flow & 1078 & 851 & 1769 & 351 & 1178 & 766 & 270 & 1153 & 658 & 897$\pm$462 \\
    MacNet & 315 & 326 & 298 & 307 & 318 & 207 & 298 & 359 & 386 & 313$\pm$49 \\
    AgentConductor & 271 & 343 & 166 & 223 & 169 & 191 & 199 & 307 & 190 & 229$\pm$64 \\
    \bottomrule
  \end{tabular}
\end{table*}

\begin{table*}[h]
  \caption{Deliverable output (words) across 9 evaluation tasks. We count only final deliverable files (source code, content documents), excluding intermediate artifacts such as logs, planning documents, and build outputs.}
  \label{tab:words}
  \centering
  \small
  \resizebox{\textwidth}{!}{
  \setlength{\tabcolsep}{3pt}
  \begin{tabular}{l@{\hskip 4pt}rrrrrr@{\hskip 4pt}rrr@{\hskip 4pt}r}
    \toprule
    & \multicolumn{6}{c}{Heavy} & \multicolumn{3}{c}{Medium} & \\
    \cmidrule(lr){2-7} \cmidrule(lr){8-10}
    Method & Pixel. & Shop. & Comp. & Arcade. & Slide. & Climate. & LinAlg. & Math. & Cloud. & Overall \\
    \midrule
    SquidAgent & 12{,}180 & 14{,}180 & 5{,}538 & 23{,}091 & 48{,}749 & 12{,}853 & 6{,}806 & 27{,}225 & 18{,}461 & 18{,}787$\pm$13{,}260 \\
    \midrule
    Claude Code & 9{,}995 & 16{,}700 & 6{,}623 & 19{,}338 & 45{,}710 & 13{,}500 & 11{,}955 & 20{,}212 & 18{,}174 & 18{,}023$\pm$11{,}328 \\
    SeqCV & 5{,}910 & 34{,}804 & 11{,}120 & 36{,}680 & 56{,}599 & 34{,}981 & 18{,}813 & 62{,}580 & 54{,}444 & 35{,}103$\pm$20{,}269 \\
    MetaGPT & 41{,}279 & 39{,}055 & 25{,}502 & 38{,}384 & 32{,}020 & 21{,}010 & 9{,}676 & 21{,}706 & 12{,}827 & 26{,}829$\pm$11{,}570 \\
    AFlow & 5{,}681 & 4{,}696 & 3{,}511 & 5{,}844 & 2{,}133 & 1{,}796 & 5{,}157 & 7{,}332 & 4{,}728 & 4{,}542$\pm$1{,}790 \\
    Flow & 5{,}174 & 11{,}233 & 6{,}192 & 4{,}528 & 12{,}722 & 8{,}349 & 5{,}211 & 25{,}251 & 15{,}529 & 10{,}465$\pm$6{,}741 \\
    MacNet & 3{,}591 & 4{,}010 & 2{,}712 & 4{,}881 & 4{,}293 & 2{,}919 & 5{,}483 & 8{,}724 & 9{,}303 & 5{,}102$\pm$2{,}387 \\
    AgentConductor & 3{,}550 & 5{,}145 & 1{,}809 & 4{,}170 & 3{,}566 & 3{,}190 & 4{,}716 & 8{,}381 & 5{,}035 & 4{,}396$\pm$1{,}822 \\
    \bottomrule
  \end{tabular}
  }
\end{table*}

\section{Scheduler analysis}
\label{sec:analysis}

Four of the nine evaluation tasks (MathRef, SlideKit, CloudDocs, and ClimateAnalysis) contain multi-layer DAGs where the scheduler makes different parallel/serial decisions per layer. Table~\ref{tab:scheduler_decisions} reports the layer-wise alignment cost, token-cost ratio, and scheduler decision. The scheduler applies Eq.~\ref{eq:margin} independently to each layer and parallelizes only when the ratio exceeds $\alpha=1.9$.

\begin{table}[h]
  \caption{Scheduler decisions on multi-layer DAG evaluation tasks. $\widehat{C}_{\mathrm{alg}}(\mathcal{L}_k)$ is estimated separately for each layer. The scheduler parallelizes a layer only when the ratio exceeds $\alpha=1.9$.}
  \label{tab:scheduler_decisions}
  \centering
  \small
  \begin{tabular}{llccrl}
    \toprule
    task & Layer & Tasks & $\widehat{C}_{\mathrm{alg}}(\mathcal{L}_k)$ & Ratio & Decision \\
    \midrule
    \multirow{2}{*}{MathRef}
      & L0: chapters & 6 & 2500 & 5.09 & Parallel \\
      & L1: appendices & 4 & 6000 & 1.45 & \textbf{Serial} \\
    \midrule
    \multirow{2}{*}{SlideKit}
      & L0: decks & 6 & 4000 & 4.80 & Parallel \\
      & L1: handouts & 3 & 3500 & 2.00 & Parallel \\
    \midrule
    \multirow{2}{*}{CloudDocs}
      & L0: modules & 5 & 3500 & 3.31 & Parallel \\
      & L1: guides & 3 & 3000 & 1.64 & \textbf{Serial} \\
    \midrule
    \multirow{3}{*}{ClimateAnalysis}
      & L0: scripts & 4 & 2000 & 3.53 & Parallel \\
      & L1: correlation & 1 & 0 & 1.00 & \textbf{Serial} \\
      & L2: reports & 3 & 5000 & 0.93 & \textbf{Serial} \\
    \bottomrule
  \end{tabular}
\end{table}

On MathRef, the scheduler parallelizes the six heavy chapters ($\widehat{C}_{\mathrm{alg}}(\mathcal{L}_0) = 2500$, $\rho_0 = 5.09$) but serializes the four appendices ($\widehat{C}_{\mathrm{alg}}(\mathcal{L}_1) = 6000$, $\rho_1 = 1.45$). The appendices must cross-reference exact theorem numbers, notation, and definitions across all chapters, and the high alignment cost reflects this tight coupling.

On CloudDocs, the scheduler parallelizes the five independent API module references ($\widehat{C}_{\mathrm{alg}}(\mathcal{L}_0) = 3500$, $\rho_0 = 3.31$) but serializes the three cross-cutting guides ($\widehat{C}_{\mathrm{alg}}(\mathcal{L}_1) = 3000$, $\rho_1 = 1.64$), which must reference exact endpoint paths and error codes from all modules.

On ClimateAnalysis, the scheduler makes three distinct decisions: it parallelizes the independent analysis scripts ($\widehat{C}_{\mathrm{alg}}(\mathcal{L}_0) = 2000$, $\rho_0 = 3.53$), serializes the single correlation script that depends on all previous outputs, and serializes the final reports ($\widehat{C}_{\mathrm{alg}}(\mathcal{L}_2) = 5000$, $\rho_2 = 0.93$) because each report must cite exact numerical values from the analysis results.

\section{Full prompts}
\label{sec:prompts}

We provide the complete orchestrator and worker prompts used in SquidAgent (fork + bypass mode). These are the actual system prompts given to the LLM agents.

\subsection{Scheduler invocation}

The scheduler is implemented as a deterministic Python function that is automatically invoked \emph{inside} the \texttt{create\_plan\_branch} tool. When the orchestrator calls this tool with a task list, the tool: (1)~parses the task DAG from the \texttt{tasks} and \texttt{layer\_convention\_tokens} fields, (2)~runs the scheduler (computing topological layers, applying the token-cost inequality per layer with layer-specific $\widehat{C}_{\mathrm{alg}}(\mathcal{L}_k)$), and (3)~returns the execution schedule as part of the tool response. The orchestrator never calls the scheduler directly --- it simply calls \texttt{create\_plan\_branch} and receives the schedule in the response JSON. This design ensures the scheduling decision is always made deterministically and cannot be overridden or skipped by the LLM.

\subsection{Orchestrator prompt}

\begin{tcolorbox}[colback=gray!5, colframe=gray!60, boxrule=0.5pt, breakable]
{\small\ttfamily
\textbf{Orchestrator Instructions}\\[4pt]
You are an autonomous development orchestrator. All work happens on the \texttt{flow} branch.\\[4pt]
\textbf{MANDATORY Execution Mode: Parallel Worker Dispatch}\\[2pt]
You MUST spawn workers for parallel tasks. Do NOT write parallel tasks yourself with Write --- you MUST delegate them to workers. Only SERIAL tasks (as marked by the scheduler) should be written directly.\\[2pt]
After \texttt{create\_plan\_branch} returns, inspect the \texttt{schedule} field:\\
- ``parallel'' → MUST spawn workers (one per task, all in the SAME response so they run in parallel)\\
- ``serial'' → write these tasks DIRECTLY YOURSELF using Write/Edit.\\[2pt]
If a serial task depends on earlier-layer outputs, use Read
to load those dependency files in full before writing the output.
Extracting headings or matching patterns with grep is insufficient
for tasks that aggregate or cross-reference upstream content.\\[2pt]
Execution pattern:\\
1. Call \texttt{create\_plan\_branch} to get the schedule\\
2. For PARALLEL tasks: launch workers in a single response (they run concurrently)\\
3. For SERIAL tasks: write them yourself while workers run (or after, depending on dependencies)\\
4. After all tasks complete: review for cross-file consistency\\[2pt]
CRITICAL rules:\\
- Each worker writes ONLY its assigned file(s) --- no overlap\\
- Include full conventions in each worker prompt (workers share your context but be explicit)\\
- SERIAL tasks: write directly yourself, do NOT spawn workers\\
- After all workers complete: review for cross-file consistency and fix any issues\\
- Do NOT spawn a separate worker for review/verification --- do it yourself directly\\[2pt]
For multi-layer DAGs:\\
1. Layer 0 tasks first (spawn workers or write serial)\\
2. Wait for all Layer 0 workers to complete\\
3. Layer 1 tasks next (can reference Layer 0 outputs)\\
4. Repeat until all layers done\\[8pt]
\textbf{Your Autonomy}\\[2pt]
You have complete freedom to choose:\\
- Execute tasks directly yourself OR decompose and delegate to workers\\
- Create formal plans OR work ad-hoc\\
- Work sequentially OR parallelize\\
- Revise your approach mid-execution\\[4pt]
Trust your judgment. Choose the simplest effective approach.\\[8pt]
\textbf{Working with Plans}\\[2pt]
Plans are a tool, not a requirement. Use them when helpful.\\[2pt]
Creating a Plan:
\begin{lstlisting}[style=promptstyle]
create_plan_branch({
    "session_id": "session-20250117-120000",
    "user_request": "Add user authentication",
    "architecture": "MVC with Flask, SQLAlchemy, bcrypt...",
    "design_doc": "Repository pattern for data access...",
    "technology_stack": "Python 3.10, Flask 2.3, ...",
    "tasks": [
        {
            "id": "001",
            "description": "Create User model",
            "depends_on": [],
            "preconditions": [],
            "provides": ["User model class", "User.email field"],
            "files": ["models/user.py"],
            "estimated_time": "10 minutes",
            "estimated_output_tokens": 8000,
            "priority": "high"
        },
        ...
    ],
    "estimated_total_time": "45 minutes",
    "layer_convention_tokens": {"L0": 3000, "L1": 5000},
    "dependency_graph": "Ready immediately: 001, 002\n
                         After 001: 003 available\n..."
})
\end{lstlisting}
\vspace{-4pt}
Per-task \texttt{estimated\_output\_tokens} is REQUIRED. The scheduler uses it to decide parallel vs serial: if the total serial cost is much larger than the parallel cost (plan overhead + longest task), tasks run in parallel; otherwise serial. Without this field, the scheduler cannot make good decisions. Estimate how many tokens the worker LLM will output for each task (e.g., a 2000-word document $\approx$ 8000 tokens, a small config file $\approx$ 500 tokens).\\[4pt]
\texttt{layer\_convention\_tokens} is REQUIRED. For each dependency layer (L0, L1, \ldots), estimate how many output tokens the orchestrator needs to write shared conventions before launching that layer's parallel workers. Different layers may have different alignment costs --- e.g., independent chapters (L0) need fewer conventions than cross-referencing appendices (L1). This directly affects the scheduler's per-layer parallel/serial decision --- higher values make serial more likely. Guidelines per layer:\\
- Independent tasks with few shared interfaces: \textasciitilde500--2000 tokens.\\
- Moderate tasks with shared naming, formatting, or API conventions: \textasciitilde3000--4000 tokens.\\
- Tightly coupled tasks with extensive cross-file references: \textasciitilde6000--9000 tokens.\\
- Aggregation layers, such as appendices, indexes, or consolidated reports over prior-layer outputs: \textasciitilde2300--2500 tokens for a few small documents; use \textasciitilde2400 and never below 2300. Scale up for more or longer documents.\\[8pt]
\textbf{Parallelizability Scheduling (Automatic)}\\[2pt]
\texttt{create\_plan\_branch} automatically runs the scheduler and returns the schedule in its response. The scheduler uses \texttt{estimated\_output\_tokens} to decide: small tasks (\textasciitilde500--1000 tokens) run serial, large tasks (\textasciitilde5000+) run parallel.\\[2pt]
CRITICAL --- After \texttt{create\_plan\_branch} returns, IMMEDIATELY inspect the \texttt{schedule} field and follow the execution mode instructions above.\\[8pt]
\textbf{Conventions (Session Fork Mode)}\\[2pt]
For ``parallel'' layers: Session Fork is enabled --- workers inherit your full conversation context via fork, so do NOT write a CONVENTIONS.md file. Instead, you MUST output a detailed Conventions section in your response BEFORE launching workers.\\[2pt]
For a layer that depends on earlier outputs, first use Read
to load the full contents of its dependency files before writing
conventions and launching workers. Skip files already read in
this session. This makes the upstream content available in the
context inherited by workers. Grep matches alone are insufficient.
The first layer requires no such reading.\\[2pt]
This section must contain:\\
1. Concrete rules --- one unambiguous choice per convention (notation, style, structure, terminology, formatting)\\
2. Exhaustive coverage --- cover ALL ambiguous decisions that independent workers might resolve differently\\
3. Examples --- show the exact syntax for each convention\\[2pt]
Workers will see these conventions through the forked conversation history --- no file needed. After outputting conventions, proceed to launch workers immediately.\\[8pt]
\textbf{Minimize text output.} Every output token costs time and money. Do NOT output status tables, progress reports, or markdown summaries after each worker completion. A single short sentence is enough. Save output tokens for actual content.
}
\end{tcolorbox}

\subsection{Worker prompt}

\begin{tcolorbox}[colback=gray!5, colframe=gray!60, boxrule=0.5pt, breakable]
{\small\ttfamily
You are an autonomous development agent specialized in executing individual programming tasks within an isolated git worktree environment. You operate within a sophisticated workflow architecture where tasks are pre-planned, metadata-driven, and executed with rigorous design-first principles.\\[4pt]
\textbf{Your Core Identity}\\[2pt]
You are a disciplined, methodical developer who:\\
- Works in isolated git worktrees to enable parallel task execution\\
- Follows a commit-only architecture where git commits are the single source of truth\\
- Implements a design-first approach with progressive, well-documented commits\\
- Tests if needed\\
- Autonomously merges completed work\\[4pt]
\textbf{Worktree Isolation Rules}\\[2pt]
You are working in an isolated git worktree, NOT the main repository directory.\\[2pt]
ABSOLUTE REQUIREMENTS:\\
1. Begin by using \texttt{cd <worktree-path>} to enter your isolated worktree\\
2. Your task branch is ALREADY checked out\\
3. All file operations happen within your worktree directory\\
4. Read from flow branch using \texttt{git show flow:<filepath>} when needed\\[4pt]
\textbf{Your Mission Structure}\\[2pt]
You execute ONE task autonomously through this workflow:\\[2pt]
1. Navigate to worktree (cd to provided path)\\
2. Read task metadata (using \texttt{mcp\_\_git\_\_parse\_task} MCP tool)\\
3. Understand context (from plan branch using \texttt{mcp\_\_git\_\_parse\_plan})\\
4. Check available code (from flow branch using \texttt{mcp\_\_git\_\_get\_provides})\\
5. Create initial design commit (design + TODO in commit message, NO files)\\
6. Implement incrementally (commit after EACH TODO with design + TODO + progress)\\
7. Run tests (if needed)\\
8. Signal completion (TASK\_COMPLETE commit on task branch --- do NOT merge to flow)\\[4pt]
\textbf{Initial Design Commit (MANDATORY)}\\[2pt]
NON-NEGOTIABLE REQUIREMENT: You MUST create an initial design commit BEFORE any implementation. Git commits are the single source of truth.\\[2pt]
This commit must contain:\\
- \#\# Design: Architecture decisions and interfaces\\
- \#\# TODO List: Implementation checklist with [ ] markers\\
- \#\# Progress: Status tracking\\[4pt]
\textbf{Progressive Implementation}\\[2pt]
CRITICAL RULE: Commit after EACH TODO item with the FULL design and updated TODO list. Each commit includes:\\
1. \#\# Implementation: What was done in THIS commit\\
2. \#\# Design: Complete design preserved from initial commit\\
3. \#\# TODO List: Updated with [x] marking completed items\\
4. \#\# Progress: Current completion ratio (X/Y tasks)\\[4pt]
\textbf{Signal Completion (MANDATORY)}\\[2pt]
After all implementation is done, create a final completion commit ON YOUR TASK BRANCH:\\[2pt]
\texttt{git commit --allow-empty -m "TASK\_COMPLETE: task-001}\\
\texttt{\#\# Provides}\\
\texttt{- capability-1}\\
\texttt{- capability-2"}\\[2pt]
CRITICAL: Do NOT merge to flow. Do NOT checkout flow. Stay on your task branch. The orchestrator merges all task branches sequentially to avoid conflicts.\\[4pt]
\textbf{Starting Your Work}\\[2pt]
When activated, immediately:\\
1. Identify worktree path from prompt\\
2. Use \texttt{cd} to enter worktree\\
3. Check for CONVENTIONS.md --- if it exists, read it FIRST and follow ALL conventions exactly\\
4. Use \texttt{mcp\_\_git\_\_parse\_task} to read your task metadata\\
5. Begin execution protocol\\[2pt]
Your workflow ends with TASK\_COMPLETE commit. Do not attempt to merge or switch branches.
}
\end{tcolorbox}

\begingroup
\section{Robustness to ratio-estimation error}
\label{app:robustness}

The following result characterizes when bounded error in the serial-to-parallel cost ratio can change the threshold decision. It separates error-induced decision changes from the intended conservatism of the safety margin.

\begin{proposition}[Robustness to ratio-estimation error]
\label{prop:ratio_robustness}
Let $\rho=T_{\mathrm{ser}}/T_{\mathrm{par}}$ be the true serial-to-parallel cost ratio, with $T_{\mathrm{par}}>0$, and let $\widehat{\rho}$ be its estimate. Suppose
\begin{equation}
  |\widehat{\rho}-\rho|\le\varepsilon,\qquad \varepsilon\ge0.
  \label{eq:ratio_error_bound}
\end{equation}
The scheduler selects parallel execution if and only if $\widehat{\rho}>\alpha$, where $\alpha>1$. If $\rho>\alpha+\varepsilon$, the scheduler is guaranteed to select parallel execution. If $\rho<\alpha-\varepsilon$, it is guaranteed to select serial execution. Consequently, estimation error can change the decision relative to applying the same threshold to the true ratio only in the region
\begin{equation}
  |\rho-\alpha|\le\varepsilon.
  \label{eq:ratio_sensitive_region}
\end{equation}
Furthermore, if $\varepsilon<\alpha-1$, every layer with $\rho\le1$ is guaranteed to be executed serially.
\end{proposition}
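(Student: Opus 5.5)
The plan is to reduce every claim to the two-sided sandwich implied by the error bound (\ref{eq:ratio_error_bound}), namely
\begin{equation*}
  \rho-\varepsilon\;\le\;\widehat{\rho}\;\le\;\rho+\varepsilon ,
\end{equation*}
and then compare each side with the threshold $\alpha$. The scheduler's rule is the fixed map $\widehat{\rho}\mapsto\textsc{Parallel}$ if $\widehat{\rho}>\alpha$ and $\textsc{Serial}$ otherwise. Each statement therefore follows from the position of the interval $[\rho-\varepsilon,\rho+\varepsilon]$ relative to $\alpha$. The hypothesis $T_{\mathrm{par}}>0$ is used only to ensure that $\rho$ is well defined.

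The proof has four steps, taken in order.

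\begin{enumerate}[leftmargin=*]
\item \emph{Parallel guarantee.} Assume $\rho>\alpha+\varepsilon$. The lower half of the sandwich gives $\widehat{\rho}\ge\rho-\varepsilon>\alpha$, which is strict, so the scheduler selects \textsc{Parallel}.

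\item \emph{Serial guarantee.} Assume $\rho<\alpha-\varepsilon$. The upper half gives $\widehat{\rho}\le\rho+\varepsilon<\alpha$, so $\widehat{\rho}>\alpha$ fails and the scheduler selects \textsc{Serial}.

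\item \emph{Sensitive region.} The decision ``applying the same threshold to the true ratio'' is $\rho>\alpha$. Take the contrapositive. If $|\rho-\alpha|>\varepsilon$, then either $\rho>\alpha+\varepsilon$ or $\rho<\alpha-\varepsilon$.
\begin{itemize}
\item In the first case Step 1 gives \textsc{Parallel}, and $\rho>\alpha$ holds, so the oracle decision is also \textsc{Parallel}.
\item In the second case Step 2 gives \textsc{Serial}, and $\rho<\alpha$ holds, so the oracle decision is also \textsc{Serial}.
\end{itemize}
The two decisions therefore agree outside the region (\ref{eq:ratio_sensitive_region}). This means a disagreement can occur only when $|\rho-\alpha|\le\varepsilon$.

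\item \emph{Final claim.} Assume $\varepsilon<\alpha-1$ and $\rho\le 1$. Then $\rho\le1<\alpha-\varepsilon$, so Step 2 applies directly and forces \textsc{Serial}.
\end{enumerate}

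No step is substantively difficult. The one point that needs care is the boundary convention in Step 3. The threshold is strict ($\widehat{\rho}>\alpha$), so the boundary cases $\rho=\alpha\pm\varepsilon$ must fall inside the closed region $|\rho-\alpha|\le\varepsilon$ rather than in the guaranteed regions. That is why Steps 1 and 2 use strict hypotheses, and it is where I would double-check the inequalities.

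A remark can be added after the proof. The guaranteed region for \textsc{Serial} includes layers with $1<\rho<\alpha-\varepsilon$, which reflects the intended conservatism of the margin. This conservatism is a property of the rule itself and is not caused by estimation error.
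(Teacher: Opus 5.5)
Your proposal is correct and follows the paper's route: both use the sandwich $\rho-\varepsilon\le\widehat{\rho}\le\rho+\varepsilon$ to get $\widehat{\rho}>\alpha$ when $\rho>\alpha+\varepsilon$ and $\widehat{\rho}<\alpha$ when $\rho<\alpha-\varepsilon$, and then read off the sensitive region and the $\rho\le 1$ claim. Your Step~3 spells out the comparison with the true-ratio decision, which the paper compresses into ``these inequalities imply.'' Your Step~4 goes through Step~2 via $\rho\le 1<\alpha-\varepsilon$, whereas the paper chains $\widehat{\rho}\le 1+\varepsilon<\alpha$ directly; these are the same argument.
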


\begin{proof}
If $\rho>\alpha+\varepsilon$, Eq.~\ref{eq:ratio_error_bound} yields
\begin{equation}
  \widehat{\rho}\ge\rho-\varepsilon>\alpha,
\end{equation}
so the scheduler selects parallel execution. If $\rho<\alpha-\varepsilon$, then
\begin{equation}
  \widehat{\rho}\le\rho+\varepsilon<\alpha,
\end{equation}
so the scheduler selects serial execution. These inequalities imply Eq.~\ref{eq:ratio_sensitive_region}. Finally, if $\rho\le1$ and $\varepsilon<\alpha-1$, then
\begin{equation}
  \widehat{\rho}\le\rho+\varepsilon\le1+\varepsilon<\alpha,
\end{equation}
and incorrect parallelization is impossible. By the strict scheduling rule, an estimated tie $\widehat{\rho}=\alpha$ also selects serial execution.
\end{proof}

\paragraph{Interpretation of the safety margin.}
Small estimation errors do not change the decision when the true ratio is sufficiently far from $\alpha$. The margin asymmetrically protects against parallelizing a layer whose serial execution is cheaper, but may forgo a modest parallel gain. With an exact estimate, layers with $1<\rho\le\alpha$ are deliberately serialized. Under the error bound, a truly parallel-favoring layer can be serialized only if $1<\rho\le\alpha+\varepsilon$; the interval $\alpha<\rho\le\alpha+\varepsilon$ captures additional serialization that can arise from estimation error. This is a conditional decision-stability result under Eq.~\ref{eq:ratio_error_bound}.

\section{Additional prediction and scheduling results}
\label{app:prediction_details}

\subsection{Prediction correlations and rank reversals}
We extend the prediction comparison in Section~\ref{sec:prediction} with a rank-ordering analysis, using the same 59 worker subtasks from nine evaluation tasks and the median of five independent predictions for each quantity. We compare predicted and observed subtask orderings using Kendall's $\tau_b$ and the fraction of inverted pairs, excluding tied pairs from the latter. Confidence intervals are obtained using a paired bootstrap clustered by evaluation task.

\begin{table}[h]
\centering
\small
\caption{Rank-ordering agreement with the corresponding observed quantities. Inverted-pair fractions are computed among non-tied comparisons; brackets report 95\% confidence intervals.}
\label{tab:rank_reversals}
\begin{tabular}{lcc}
\toprule
Predicted quantity & Kendall's $\tau_b\,\uparrow$ & Inverted pairs $\downarrow$ \\
\midrule
Output-token count & $0.62\ [0.30,0.79]$ & $17\%\ [8\%,33\%]$ \\
Wall-clock execution time & $0.12$ & $44\%\ [33\%,55\%]$ \\
\bottomrule
\end{tabular}
\end{table}

There are $\binom{59}{2}=1{,}711$ subtask pairs before excluding ties. Token predictions preserve ordering substantially better (Table~\ref{tab:rank_reversals}). The rank-ordering comparison uses a paired bootstrap clustered by task. The inversion rate for wall-clock predictions is 27 percentage points higher. These rank reversals cannot be repaired by a constant multiplicative correction or a strictly increasing recalibration. This supports the scoped observation that direct wall-clock estimates mis-rank execution durations in the evaluated workflows. Figure~\ref{fig:scheduler-analysis}(a) provides complementary scheduler-level evidence by comparing estimated and realized layer cost ratios.

\subsection{Safety-margin stability across nine tasks}
\label{app:margin_stability}
We repeat the planning stage three times for each of the nine tasks and record $\widehat{\rho}_k$ for each dependency layer. Table~\ref{tab:margin_stability} summarizes the mean and range of the resulting estimates. These are repeated-planning results, distinct from the original executed-plan examples in Appendix~\ref{sec:analysis}. Because the scheduler parallelizes a layer exactly when $\widehat{\rho}_k>\alpha$, a decision changes only when $\alpha$ crosses the corresponding estimated ratio.

\begin{table}[h]
\centering
\small
\setlength{\tabcolsep}{3pt}
\caption{Repeated-planning sensitivity results. Mean and range are across three planning runs; decisions use $\alpha=1.9$.}
\label{tab:margin_stability}
\begin{tabular}{lllcl}
\toprule
Task & Layer & Mean $\widehat{\rho}_k$ & Range & Decision \\
\midrule
MathRef & Chapters & 4.64 & $[4.53,4.73]$ & Parallel \\
MathRef & Appendices & 1.71 & $[1.70,1.74]$ & Serial \\
SlideKit & Decks & 4.03 & $[3.75,4.17]$ & Parallel \\
SlideKit & Handouts & 1.71 & $[1.53,1.88]$ & Serial \\
CloudDocs & Modules & 3.16 & $[3.06,3.31]$ & Parallel \\
CloudDocs & Guides & 1.63 & $[1.44,1.75]$ & Serial \\
ClimateAnalysis & Scripts & 2.96 & $[2.61,3.36]$ & Parallel \\
ClimateAnalysis & Reports & 1.13 & $[1.00,1.25]$ & Serial \\
Remaining tasks & Independent layer & --- & $[2.60,5.71]$ & Parallel \\
\bottomrule
\end{tabular}
\end{table}

The remaining tasks are LinAlgBook, PixelCraft, ArcadeBox, CompressKit, and ShopFlow. Across the 42 layer instances, tightly coupled layers have lower estimated ratios than loosely coupled layers. The resulting separation gives a stable interval of safety margins that produce the same decisions, including the default $\alpha=1.9$, which lies near the lower end of the interval. This experiment characterizes decision stability under repeated planning; the fixed-plan MathRef experiment in Section~\ref{sec:ablation} separately measures throughput under different margins.

\section{Transfer to external Flow tasks}
\label{app:flow_transfer}
To assess transfer beyond our designed benchmarks, we evaluate three practical tasks from Flow~\citep{niu2025flow}: conference website design, game development, and LaTeX Beamer generation. We compare SquidAgent with Claude Code. Quality is evaluated using task-specific automated rubrics covering executability, required components, and task-specific correctness.

\begin{table}[h]
\centering
\small
\setlength{\tabcolsep}{4pt}
\caption{Preliminary transfer evaluation on three external tasks from Flow. Wall time is in seconds, throughput in words/s, and quality in percent.}
\label{tab:flow_transfer}
\begin{tabular}{llrrrr}
\toprule
Metric & Method & Website & Game & Beamer & Mean \\
\midrule
Wall time $\downarrow$ & Claude Code & 369 & 698 & 92 & 386.3 \\
 & SquidAgent & \textbf{259} & \textbf{439} & \textbf{83} & \textbf{260.3} \\
\midrule
Throughput $\uparrow$ & Claude Code & 19.8 & 4.0 & 10.9 & 11.6 \\
 & SquidAgent & \textbf{28.1} & \textbf{6.2} & \textbf{12.0} & \textbf{15.4} \\
\midrule
Quality $\uparrow$ & Claude Code & 100.0 & 100.0 & 97.0 & 99.0 \\
 & SquidAgent & 100.0 & 100.0 & 100.0 & 100.0 \\
\bottomrule
\end{tabular}
\end{table}

These tasks are relatively lightweight and offer limited opportunities for parallelism. Nevertheless, SquidAgent reduces mean wall time from 386.3 to 260.3 seconds, corresponding to a 1.48$\times$ speedup, while increasing mean throughput from 11.6 to 15.4 words/s and maintaining comparable task-completion quality. These results provide preliminary evidence of transfer beyond the author-designed suite.
\par
\endgroup

\section{Limitations and future work}
\label{app:limitations}

\paragraph{Noisy token-budget estimation.}
The scheduler relies on the orchestrator LLM to estimate per-task output tokens and per-layer alignment costs. As shown in Figure~\ref{fig:scheduler-analysis}(a), these estimates are noisy---individual task estimates can differ from actual output by 2--5$\times$. The margin $\alpha$ partially compensates, but it is a global constant rather than being adaptive to the prediction confidence of individual layers. Future work could explore confidence-aware margins or calibration of token estimates.

{
\paragraph{External latency beyond token generation.}
Predicted output tokens are a practical proxy mainly for LLM-generation-dominated workflows and do not directly capture latency from tool execution, retrieval, environment interaction, or external API calls. These costs can vary substantially with network conditions, server load, caching, and retries, making them difficult to estimate reliably before execution. Although context inheritance can avoid some duplicated calls by allowing workers to reuse tool and retrieval results obtained during planning, it cannot eliminate task-specific external operations. Future work could incorporate environment-specific latency models or online runtime measurements into the scheduling criterion.
}

\paragraph{Layer-level granularity.}
The cost inequality operates at the DAG layer level: all tasks within a layer receive the same parallel/serial decision. This may miss opportunities when a layer contains a mix of large and small tasks that would benefit from different treatment. The current implementation partially addresses this via a threshold that forces small tasks ($<$3000 estimated tokens) into serial mode, but a finer-grained per-task decision mechanism could improve efficiency.

\paragraph{Broader impact.}
This work proposes a scheduling strategy for coordinating existing LLM agents. It does not introduce new model capabilities, training procedures, or datasets. We do not foresee direct negative societal impacts beyond those already inherent in the underlying LLMs.

\begingroup
\paragraph{Future work.}
{Future work could extend SquidAgent along three directions. First, adaptive scheduling and dependency modeling could improve decisions as worker capabilities and task relationships change. Cost estimates could be recalibrated during model adaptation~\citep{huang2025adaptive,cai2026vi,li2026teacher,wang2026estimating}, while causal discovery and effect identifiability~\citep{li2025recovery,li2025efficient,licausal}, together with causal reasoning and failure tracing~\citep{lin2026models,lin2026medcausalx,zhu2026craftcausalresponsibilityfailure}, could inform investigations of hidden dependencies and error propagation across workers. This would require additional validation, since the current DAG represents execution dependencies rather than identified causal effects. Second, trustworthy execution and information management could be integrated into the scheduling objective. Relevant mechanisms include quality checks for unreliable feedback~\citep{cai2025reinforcement,chen2026false,Huang2026RealTimeAR} and misleading visual inputs~\citep{chen2026mvibench}, safeguards against unsafe execution~\citep{wu2026internal} and privacy leakage or unauthorized content reuse~\citep{xiang2026agents,huang2026delving}, and controls for knowledge retention~\citep{huang2026gradient} and long-term memory attribution~\citep{zhang2026memmark}. Their computational overhead should be included when deciding whether parallelization is worthwhile. Third, cross-domain evaluation could test whether these scheduling principles generalize to interaction and content generation, including dialogue and selling~\citep{gao2025can,gao2024no,su2026sellmore} and image and game generation~\citep{chen2025r2ibench,huang2026guiagents}. Further evaluation could cover clinical and driving workflows~\citep{zhu2026path,hu2026research,zhu2025pathology,zhu2026medsynapse,huang2026deferred}, as well as security analytics and delay-sensitive financial prediction~\citep{wang2026trident,ma2026musk,you2026drdgrl}. Such evaluations should examine latency, quality, and robustness jointly rather than assume that gains on the current tasks transfer directly.}
\par
\endgroup

\section{Quality scoring rubrics}
\label{app:rubrics}

Each task has a task-specific rubric organized into three tiers:

\begin{itemize}[nosep]
  \item \textbf{Tier~1 --- Task Compliance (Hard Gate):} Each item maps directly to a requirement in the benchmark prompt. If any Tier~1 item is NO, the output is marked ``not acceptable.'' Items include required files, features, algorithms, and a deliverability check (correct file structure, runs/renders without errors).
  \item \textbf{Tier~2 --- Domain-Specific Correctness:} Deeper checks on implementation quality, algorithmic correctness, cross-reference accuracy, and notation consistency.
  \item \textbf{Tier~3 --- Quality and Polish:} Nice-to-have quality indicators such as responsive design, color schemes, benchmark modes, and documentation completeness.
\end{itemize}

Each item is scored binary YES (fully met with concrete evidence) or NO (anything less). Claude Opus 4.6 reads the actual output files and provides specific evidence for each judgment. Total items per benchmark range from 21 to 37.

\subsubsection*{PixelCraft}

\begin{small}
\begin{tabular}{@{}lp{11.5cm}@{}}
\toprule
\multicolumn{2}{l}{\textbf{Tier 1: Task Compliance (Hard Gate)}} \\
\midrule
1.1 & Procedurally generated world of at least 256x256 blocks \\
1.2 & Multiple biomes with distinct terrain: forest, desert, snow, plains \\
1.3 & At least 15 block types present in code \\
1.4 & Player left/right movement with jump and gravity physics \\
1.5 & Collision detection against solid blocks \\
1.6 & Health system with damage, healing, death, and respawn \\
1.7 & Hunger system that depletes over time \\
1.8 & 36-slot inventory: 9 hotbar + 27 storage with item stacking \\
1.9 & 3x3 crafting grid with at least 15 recipes \\
1.10 & At least 3 enemy types: melee chaser, ranged attacker, passive animal \\
1.11 & Day-night cycle over 10 real-time minutes \\
1.12 & Save/load world state to JSON with compression \\
1.13 & Deliverability: Python 3.10+ with Pygame only, no external assets, 1024x768 window, runs via single entry point script \\
\midrule
\multicolumn{2}{l}{\textbf{Tier 2: Game Mechanics Correctness}} \\
\midrule
2.1 & Underground caves with ore deposits at varying depths \\
2.2 & Natural structures generated: trees, water lakes, flower patches \\
2.3 & Each block has visual appearance, mining hardness, and drop items \\
2.4 & Block mining with progress animation and block placement from inventory \\
2.5 & Hotbar selection with number keys \\
2.6 & Drag-and-drop between inventory slots \\
2.7 & Enemy AI: chase, attack, wander, flee behaviors \\
2.8 & Enemies spawn at night or in dark areas \\
2.9 & Knockback and attack cooldowns on enemies \\
2.10 & Smooth sky color transitions during day-night cycle \\
2.11 & Darkness overlay at night with reduced visibility \\
2.12 & Torch placement provides local lighting \\
\midrule
\multicolumn{2}{l}{\textbf{Tier 3: Quality / Polish}} \\
\midrule
3.1 & UI: health and hunger bars displayed \\
3.2 & UI: minimap showing nearby terrain \\
3.3 & UI: debug info with FPS, coordinates, current biome \\
3.4 & Inventory screen overlay toggled with E key \\
3.5 & Multiple save slots \\
3.6 & Auto-save every 2 minutes \\
3.7 & Block breaking particles \\
3.8 & Damage numbers displayed on hit \\
3.9 & Item pickup sparkle effects \\
3.10 & Torch flame visual effects \\
3.11 & Targets 60 FPS \\
3.12 & All graphics use colored rectangles and pixels, no external assets \\
\bottomrule
\end{tabular}
\end{small}
\vspace{4pt}

\subsubsection*{ShopFlow}

\begin{small}
\begin{tabular}{@{}lp{11.5cm}@{}}
\toprule
\multicolumn{2}{l}{\textbf{Tier 1: Task Compliance (Hard Gate)}} \\
\midrule
1.1 & Product catalog with browsing, pagination, and product detail pages \\
1.2 & Filter by category, price range, and rating; search by keyword; sort by price/rating/newest \\
1.3 & Shopping cart: add with quantity, update quantities, remove, subtotals and total \\
1.4 & User registration with email/username/password, login/logout with session management \\
1.5 & Checkout with shipping address form validation and order confirmation with order number \\
1.6 & Order status tracking with all statuses: pending, paid, shipped, delivered, cancelled \\
1.7 & Admin dashboard with overview statistics and order/product management \\
1.8 & Backend: Flask + SQLAlchemy + SQLite; frontend: vanilla HTML/CSS/JS only \\
1.9 & All data loaded dynamically from REST API via fetch(), no hardcoded data \\
1.10 & Session-based authentication with server-side sessions \\
1.11 & Database seed script with 20+ products across 5 categories, plus admin and regular user \\
1.12 & Deliverability: runnable with single command `python app.py`, no external image assets, no CSS frameworks \\
\midrule
\multicolumn{2}{l}{\textbf{Tier 2: Implementation Correctness}} \\
\midrule
2.1 & Cart persists across page navigation \\
2.2 & Stock availability checked before adding to cart \\
2.3 & Product reviews: one per product per user \\
2.4 & Payment simulation: mark as paid after confirmation \\
2.5 & Cancel pending orders functionality \\
2.6 & Order history viewable by user \\
2.7 & Admin sales analytics: top products, daily orders \\
2.8 & Admin order filtering and status updates \\
2.9 & All API endpoints validate inputs and return appropriate error messages \\
2.10 & Security: password hashing, CSRF-safe session auth, SQL injection prevention via ORM \\
\midrule
\multicolumn{2}{l}{\textbf{Tier 3: Quality / Polish}} \\
\midrule
3.1 & Responsive design working on mobile and desktop \\
3.2 & Navigation with search bar, cart icon with item count, user menu \\
3.3 & Every button, form, and link works end-to-end with backend API \\
3.4 & Admin analytics with visual charts (CSS-based, no external charting libraries) \\
3.5 & CSS: responsive layout with mobile breakpoints, hover effects, transitions, CSS custom properties \\
3.6 & At least 600 lines of CSS \\
3.7 & CSS-generated placeholders for product images \\
3.8 & Frontend is visually polished, not raw unstyled HTML \\
\bottomrule
\end{tabular}
\end{small}
\vspace{4pt}

\subsubsection*{CompressKit}

\begin{small}
\begin{tabular}{@{}lp{11.5cm}@{}}
\toprule
\multicolumn{2}{l}{\textbf{Tier 1: Task Compliance (Hard Gate)}} \\
\midrule
1.1 & Huffman coding implemented from scratch with compress and decompress \\
1.2 & LZ77 implemented from scratch with compress and decompress \\
1.3 & LZW implemented from scratch with compress and decompress \\
1.4 & Burrows-Wheeler Transform with Move-to-Front encoding implemented \\
1.5 & Run-Length Encoding implemented from scratch \\
1.6 & Arithmetic coding implemented from scratch \\
1.7 & Deflate combining LZ77 and Huffman implemented \\
1.8 & Round-trip correctness: decompressing a compressed file produces exact original for all algorithms \\
1.9 & Custom archive format for bundling multiple files with filenames, sizes, checksums, compressed data \\
1.10 & Command-line interface with compress, decompress, archive create/extract, and benchmark mode \\
1.11 & No external compression libraries used \\
1.12 & Deliverability: Python 3.10+, runnable from command line, no external libraries \\
\midrule
\multicolumn{2}{l}{\textbf{Tier 2: Algorithm Correctness}} \\
\midrule
2.1 & All algorithms handle edge cases: empty input, single byte, all identical bytes, random bytes \\
2.2 & CRC32 checksum using lookup table implemented \\
2.3 & Adler32 rolling checksum implemented \\
2.4 & Integrity verification on decompression \\
2.5 & Archive supports compressing entire directories preserving structure \\
2.6 & Archive integrity verification on extraction \\
2.7 & Decompression auto-detects algorithm from header \\
2.8 & Bit-level I/O utility for sub-byte operations \\
\midrule
\multicolumn{2}{l}{\textbf{Tier 3: Quality / Polish}} \\
\midrule
3.1 & Benchmark mode reports compression ratio, speed, and correctness for all algorithms \\
3.2 & Built-in self-test generating various data patterns (random, repeated, text-like) and verifying round-trip \\
3.3 & Show file and archive metadata command \\
3.4 & CLI provides clear usage/help messages for all subcommands \\
\bottomrule
\end{tabular}
\end{small}
\vspace{4pt}

\subsubsection*{ArcadeBox}

\begin{small}
\begin{tabular}{@{}lp{11.5cm}@{}}
\toprule
\multicolumn{2}{l}{\textbf{Tier 1: Task Compliance (Hard Gate)}} \\
\midrule
1.1 & Tetris game present with 10x20 grid, 7 standard tetromino shapes, gravity, level-up every 10 lines, scoring 100/300/500/800 \\
1.2 & Minesweeper game present with 3 difficulty levels (9x9/10, 16x16/40, 30x16/99), first click always safe, flood-fill reveal \\
1.3 & 2048 game present with 4x4 grid, tile merging, new tile probabilities (90\% 2, 10\% 4), win at 2048 \\
1.4 & Snake game present with grid movement, growth on food, game over on self/wall collision \\
1.5 & Breakout game present with paddle, ball, 5 rows of bricks, 3 lives, ball angle varies on paddle hit \\
1.6 & Sudoku game present with puzzle generator, 3 difficulty levels, click-to-fill, conflict highlighting \\
1.7 & Memory Match game present with 3 difficulty levels (4x4, 4x6, 6x6), flip matching logic with 1s delay \\
1.8 & Gomoku game present with 15x15 board, AI opponent using minimax with alpha-beta pruning \\
1.9 & Main menu/arcade shell for game selection \\
1.10 & All 8 games use HTML5 Canvas for rendering \\
1.11 & Pure HTML/CSS/JS only, no frameworks/libraries/build tools/external assets \\
1.12 & Deliverability: works by opening index.html directly in a browser, no server required \\
\midrule
\multicolumn{2}{l}{\textbf{Tier 2: Game Implementation Correctness}} \\
\midrule
2.1 & Tetris: next piece preview, ghost piece, wall kicks on rotation \\
2.2 & Minesweeper: proper number colors (1=blue, 2=green, 3=red, etc.), timer, mine counter \\
2.3 & 2048: smooth tile sliding/merging animations using requestAnimationFrame \\
2.4 & Snake: increasing speed as snake gets longer \\
2.5 & Breakout: AABB collision detection, ball speed increases on paddle hit \\
2.6 & Sudoku: pencil marks mode toggle for noting candidates \\
2.7 & Memory Match: cards display emoji symbols, track moves and time \\
2.8 & Gomoku: heuristic evaluation scoring patterns (open-four, half-open-four, open-three, etc.) \\
2.9 & Every game has pause (Escape/P), restart button, back-to-menu button \\
2.10 & High scores stored in localStorage, displayed on main menu and in-game \\
2.11 & Global statistics tracked: total games played, total time, best scores \\
\midrule
\multicolumn{2}{l}{\textbf{Tier 3: Quality / Polish}} \\
\midrule
3.1 & All rendering at 60 FPS using requestAnimationFrame \\
3.2 & Canvas scales to fit viewport while maintaining aspect ratio \\
3.3 & Keyboard controls do not scroll the page (preventDefault on arrow keys, space) \\
3.4 & Each game file is at least 300 lines of JavaScript with complete logic \\
3.5 & All game logic, physics, collision detection, and AI implemented from scratch \\
\bottomrule
\end{tabular}
\end{small}
\vspace{4pt}

\subsubsection*{SlideKit}

\begin{small}
\begin{tabular}{@{}lp{11.5cm}@{}}
\toprule
\multicolumn{2}{l}{\textbf{Tier 1: Task Compliance (Hard Gate)}} \\
\midrule
1.1 & day1\_foundations.html exists: ML Foundations covering supervised vs unsupervised, bias-variance, cross-validation, evaluation metrics, feature engineering \\
1.2 & day1\_linear.html exists: Linear Models covering linear/logistic regression, regularization, SVMs \\
1.3 & day2\_trees.html exists: Tree-Based Methods covering decision trees, random forests, gradient boosting \\
1.4 & day2\_neural.html exists: Neural Networks covering perceptrons, backprop, CNNs, RNNs, transfer learning \\
1.5 & day3\_practical.html exists: Practical ML Pipeline covering preprocessing, model selection, deployment, ethics \\
1.6 & day3\_advanced.html exists: Advanced Topics covering attention, transformers, generative models, RL \\
1.7 & handout.html exists: printable summary referencing exact equation numbers and terminology from all 6 presentations \\
1.8 & exercises.html exists: 3 exercises per session (18 total) referencing specific slides \\
1.9 & glossary.html exists: all technical terms from all 6 presentations with matching definitions \\
1.10 & Each main presentation has 20-30 slides, inline CSS and JS, speaker notes in hidden divs \\
1.11 & Each main presentation is at least 500 lines \\
1.12 & Deliverability: all files are standalone HTML with CSS for styling and JS for slide navigation (arrow keys, click); opens directly in browser \\
\midrule
\multicolumn{2}{l}{\textbf{Tier 2: Content Correctness and Cross-Referencing}} \\
\midrule
2.1 & Presentations include LaTeX-rendered equations using Unicode math symbols \\
2.2 & Presentations include SVG diagrams as specified (decision boundaries, trees, network architectures, pipeline flows) \\
2.3 & Support materials use exact same terminology, equation notation, and diagram labeling as presentations \\
2.4 & Glossary definitions match exactly how terms were introduced in source presentations \\
2.5 & Exercises reference specific slides and concepts from the corresponding presentation \\
2.6 & Support materials cross-reference using consistent session numbering (Day X, Session Y) and slide numbers \\
\midrule
\multicolumn{2}{l}{\textbf{Tier 3: Quality / Polish}} \\
\midrule
3.1 & All files share the same CSS color scheme, font choices, and slide layout conventions \\
3.2 & Support materials (handout, exercises, glossary) are 200-300 lines each \\
3.3 & Speaker notes present in hidden divs in all 6 main presentations \\
3.4 & Slide navigation works with both arrow keys and click in all presentations \\
\bottomrule
\end{tabular}
\end{small}
\vspace{4pt}

\subsubsection*{ClimateAnalysis}

\begin{small}
\begin{tabular}{@{}lp{11.5cm}@{}}
\toprule
\multicolumn{2}{l}{\textbf{Tier 1: Task Compliance (Hard Gate)}} \\
\midrule
1.1 & temperature\_analysis.py exists: loads/generates data, computes annual means, 10-year moving averages, linear regression trend, seasonal decomposition, identifies hottest/coldest years \\
1.2 & precipitation\_analysis.py exists: 6 regions, monthly/annual stats, drought detection, inter-region correlation, Mann-Kendall trend test \\
1.3 & co2\_analysis.py exists: monthly data, annual growth rates, seasonal cycle extraction, exponential curve fitting, future projections \\
1.4 & sea\_level\_analysis.py exists: 5 stations, station trends, global mean, acceleration detection (quadratic vs linear), confidence intervals \\
1.5 & correlation\_analysis.py exists: imports data from all 4 scripts, Pearson and Spearman correlations, lag correlation, Granger causality test \\
1.6 & executive\_summary.md exists: cites exact key findings from each analysis script \\
1.7 & methodology.md exists: documents all statistical methods matching implementations \\
1.8 & data\_dictionary.md exists: all variables, column names, units, data ranges matching scripts exactly \\
1.9 & Pure Python only: no pandas, numpy, or external data libraries \\
1.10 & All scripts generate synthetic data inline (no external CSV files required) \\
1.11 & Deliverability: each script is standalone and runnable, outputs formatted tables and ASCII visualizations \\
\midrule
\multicolumn{2}{l}{\textbf{Tier 2: Analytical Correctness}} \\
\midrule
2.1 & Linear regression implemented from scratch (not imported) \\
2.2 & Mann-Kendall trend test implemented from scratch \\
2.3 & Exponential curve fitting implemented from scratch \\
2.4 & Confidence intervals computed from scratch \\
2.5 & Granger causality test implemented from scratch \\
2.6 & correlation\_analysis.py imports data generation functions from the other 4 scripts \\
2.7 & Reports cite exact numbers matching script outputs \\
2.8 & Methodology formulas match the actual implementations in scripts \\
\midrule
\multicolumn{2}{l}{\textbf{Tier 3: Quality / Polish}} \\
\midrule
3.1 & Consistent variable naming across all scripts \\
3.2 & Consistent date formats: ISO 8601 \\
3.3 & Consistent units: deg-C for temperature, mm for precipitation, ppm for CO2, mm for sea level \\
3.4 & Consistent output formatting: same table border style, same decimal places \\
3.5 & Scripts output ASCII charts/visualizations \\
3.6 & Scripts are within specified line counts (400-600 lines for analysis scripts; 300-500 for correlation) \\
3.7 & Reports are within specified word counts (500-800 words for summary and methodology; 400-600 for dictionary) \\
\bottomrule
\end{tabular}
\end{small}
\vspace{4pt}

\subsubsection*{LinAlgBook}

\begin{small}
\begin{tabular}{@{}lp{11.5cm}@{}}
\toprule
\multicolumn{2}{l}{\textbf{Tier 1: Task Compliance (Hard Gate)}} \\
\midrule
1.1 & vectors.md exists: covers vector spaces, operations, linear independence, spanning sets \\
1.2 & matrix\_ops.md exists: covers matrix multiplication, inverse, transpose, row reduction \\
1.3 & determinants.md exists: covers determinant definition, properties, cofactor expansion, Cramer's rule \\
1.4 & eigen.md exists: covers eigenvalues, eigenvectors, diagonalization, characteristic polynomial \\
1.5 & svd.md exists: covers SVD, low-rank approximation, applications \\
1.6 & pca.md exists: covers PCA, covariance matrix, dimensionality reduction \\
1.7 & Each file is 800-1200 words \\
1.8 & Each file contains at least 2 LaTeX worked examples \\
1.9 & Deliverability: exactly 6 markdown files with correct filenames, valid markdown that renders correctly \\
\midrule
\multicolumn{2}{l}{\textbf{Tier 2: Content Correctness}} \\
\midrule
2.1 & Vector notation is consistent across all 6 files: same bold-vs-arrow convention \\
2.2 & Default vector orientation convention is consistent across all files \\
2.3 & Subscript start convention is consistent (0-indexed or 1-indexed) across all files \\
2.4 & Transpose symbol is consistent across all files \\
2.5 & Norm bracket style is consistent across all files \\
2.6 & Matrix size convention is consistent (m x n vs n x m) across all files \\
2.7 & Eigenvalue ordering convention is consistent across all files \\
2.8 & LaTeX worked examples are mathematically correct (no computational errors) \\
\midrule
\multicolumn{2}{l}{\textbf{Tier 3: Quality / Polish}} \\
\midrule
3.1 & Tutorial is beginner-friendly, targeting first-year undergraduates \\
3.2 & Logical progression from vectors through PCA (earlier files do not reference later concepts) \\
3.3 & Consistent formatting and heading structure across all 6 files \\
3.4 & LaTeX renders correctly in standard markdown viewers \\
\bottomrule
\end{tabular}
\end{small}
\vspace{4pt}

\subsubsection*{MathRef}

\begin{small}
\begin{tabular}{@{}lp{11.5cm}@{}}
\toprule
\multicolumn{2}{l}{\textbf{Tier 1: Task Compliance (Hard Gate)}} \\
\midrule
1.1 & Chapter on propositional and predicate logic exists: truth tables, connectives, tautologies, normal forms, quantifiers \\
1.2 & Chapter on set theory exists: set operations, power sets, Cartesian products, set identities with proofs, Russell's paradox \\
1.3 & Chapter on relations exists: reflexive, symmetric, transitive, antisymmetric, equivalence relations, partial orders, Hasse diagrams \\
1.4 & Chapter on functions exists: injective, surjective, bijective, composition, inverse, pigeonhole principle, cardinality \\
1.5 & Chapter on combinatorics exists: permutations, combinations, binomial theorem, inclusion-exclusion, generating functions, recurrences \\
1.6 & Chapter on graph theory exists: adjacency matrices, isomorphism, Euler/Hamilton paths, planar graphs, coloring, trees, MST \\
1.7 & Each chapter is 3000-4000 words \\
1.8 & Each chapter contains at least 5 fully worked LaTeX proofs \\
1.9 & Symbol table appendix exists: lists every symbol used, organized by chapter, with exact LaTeX commands matching each chapter \\
1.10 & Theorem index appendix exists: lists every theorem/lemma/corollary/proposition with exact name, number, and statement as written in source chapter \\
1.11 & Cross-reference guide appendix exists: cites exact theorem numbers showing cross-chapter concept usage \\
1.12 & Proof techniques summary appendix exists: cites exact theorem numbers for each technique \\
1.13 & Deliverability: all outputs are separate markdown files, valid markdown, correct structure of 6 chapters + 4 appendices \\
\midrule
\multicolumn{2}{l}{\textbf{Tier 2: Cross-Reference Consistency}} \\
\midrule
2.1 & Consistent theorem numbering format (Theorem X.Y) across all chapters \\
2.2 & Same symbol for empty set used in all chapters and appendices \\
2.3 & Same notation for function composition in all chapters and appendices \\
2.4 & Same proof block structure in all chapters \\
2.5 & Symbol table LaTeX commands match source chapters character-for-character \\
2.6 & Theorem index statements match source chapters exactly \\
2.7 & Cross-reference guide theorem numbers are valid and exist in source chapters \\
2.8 & Proof techniques summary theorem numbers are valid and match the technique claimed \\
\midrule
\multicolumn{2}{l}{\textbf{Tier 3: Quality / Polish}} \\
\midrule
3.1 & All proofs are mathematically correct (no logical errors) \\
3.2 & Appendices are within specified word counts (400-600 for symbol table, theorem index, proof techniques; 300-500 for cross-reference guide) \\
3.3 & Content is appropriate for undergraduate level \\
3.4 & LaTeX renders correctly in standard markdown viewers \\
3.5 & Proof techniques appendix covers all listed techniques: direct proof, contradiction, induction, contrapositive, cases, diagonalization \\
\bottomrule
\end{tabular}
\end{small}
\vspace{4pt}

\subsubsection*{CloudDocs}

\begin{small}
\begin{tabular}{@{}lp{11.5cm}@{}}
\toprule
\multicolumn{2}{l}{\textbf{Tier 1: Task Compliance (Hard Gate)}} \\
\midrule
1.1 & auth\_api.md exists: OAuth2 flows (authorization code, client credentials, refresh token), JWT structure, scopes/permissions, API key management, rate limiting, 15+ endpoints \\
1.2 & storage\_api.md exists: upload (multipart, resumable, chunked), download, metadata CRUD, versioning, trash/restore, sharing links, 20+ endpoints \\
1.3 & search\_api.md exists: full-text search, metadata filters, pagination (cursor-based), saved searches, 10+ endpoints \\
1.4 & admin\_api.md exists: user management, org settings, usage analytics, audit logs, billing, webhooks, 15+ endpoints \\
1.5 & webhook\_api.md exists: event types (file.created, file.deleted, file.shared, user.joined, storage.quota\_exceeded), registration, payload schemas, retry policy, signature verification, 10+ endpoints \\
1.6 & quickstart.md exists: walks through register app, get API key, authenticate, upload file, search, share, set up webhook \\
1.7 & error\_reference.md exists: aggregates ALL error codes from all 5 modules \\
1.8 & migration\_guide.md exists: v1 to v2 breaking changes across all modules \\
1.9 & Each module API doc includes full request/response examples in curl and Python \\
1.10 & Deliverability: all outputs are separate markdown files, valid markdown, 5 module docs + 3 guides \\
\midrule
\multicolumn{2}{l}{\textbf{Tier 2: Cross-Reference Consistency}} \\
\midrule
2.1 & Quickstart references specific endpoints from module docs by exact path and method \\
2.2 & Error reference lists each error with HTTP status, code string, description, which endpoints return it, resolution steps \\
2.3 & Error codes in error\_reference.md match those defined in each module doc exactly \\
2.4 & Migration guide references exact endpoint paths from module docs \\
2.5 & Consistent endpoint format: METHOD /api/v2/path \\
2.6 & Consistent schema naming: PascalCase for types, camelCase for fields \\
2.7 & Consistent error code format: RESOURCE\_ACTION\_ERROR \\
2.8 & Same authentication header examples throughout all documents \\
\midrule
\multicolumn{2}{l}{\textbf{Tier 3: Quality / Polish}} \\
\midrule
3.1 & Module docs are within specified word counts (auth: 2500-3500; storage: 3000-4000; search: 2000-3000; admin: 2500-3500; webhook: 2000-3000) \\
3.2 & Guide docs are within specified word counts (quickstart: 800-1200; error ref: 800-1200; migration: 600-1000) \\
3.3 & Module docs meet minimum endpoint counts (auth: 15+; storage: 20+; search: 10+; admin: 15+; webhook: 10+) \\
3.4 & Migration guide lists renamed endpoints, changed schemas, deprecated fields, new required parameters \\
3.5 & Consistent formatting throughout all documents (heading levels, code blocks, table styles) \\
\bottomrule
\end{tabular}
\end{small}
\vspace{4pt}

\end{document}